\newcommand{\changes}{\color{black}}
\newcommand{\changess}{\color{black}}
\begin{document}
  
  \title{ 
	Vanilla Feedforward Neural Networks as a Discretization of Dynamical {\changes Systems}
  %\thanks{Grants or other notes
  %about the article that should go on the front page should be
  %placed here. General acknowledgments should be placed at the end of the article.}
  }

  %\titlerunning{Short form of title}        % if too long for running head
  
  \author{Duan Yifei         \and
          Li Li'ang \and Ji Guanghua \and Cai Yongqiang %etc.
  }
  
  %\authorrunning{Short form of author list} % if too long for running head
  
  \institute{
   Duan Yifei\\
  \email{202121130090@mail.bnu.edu.cn}\\           %  \\
%             \emph{Present address:} of F. Author  %  if needed
  Li Li'ang\\
    \email{202121130091@mail.bnu.edu.cn}\\  
    Ji Guanghua\\
    \email{ghji@bnu.edu.cn}\\
    Cai Yongqiang (corresponding author) \\
    \email{caiyq.math@foxmail.com}\\
    School of Mathematical Sciences, Laboratory of Mathematics
and Complex Systems, MOE, Beijing Normal University
}

  \date{Received: date / Accepted: date}
  % The correct dates will be entered by the editor

  \maketitle
  
  \begin{abstract}
    Deep learning has made significant progress in the fields of data science and natural science. Some studies have linked deep neural networks to dynamical systems, but the network structure is restricted to a residual network. It is known that residual networks can be regarded as a numerical discretization of dynamical systems. In this paper, we consider the traditional network structure and prove that vanilla feedforward networks can also be used for the numerical discretization of dynamical systems, where the width of the network is equal to the dimensions of the input and output. Our proof is based on the properties of the leaky-ReLU function and the numerical technique of the splitting method for solving differential equations. Our results could provide a new perspective for understanding the approximation properties of feedforward neural networks.
  \keywords{ Function approximation \and leaky-ReLU activation \and flow map\and
  splitting method.}
  % \PACS{PACS code1 \and PACS code2 \and more}
  \subclass{68T07\and 65P99\and 65Z05\and 41A65}
  \end{abstract}

  \section{Introduction}\label{Introduction}

  Based on the universal approximation property of neural networks, deep learning plays an important role in artificial intelligence, data science and natural science, including applications on solving differential equations \cite{Han2018Solving,Yu2018Deep,Sirignano2018DGM,Raissi2019Physicsinformed}. Various network structures, such as feedforward neural networks (FNNs), convolutional neural networks (CNNs) \cite{krizhevsky2017imagenet}, recurrent neural networks (RNNs) \cite{lecun1998gradient}, and graph neural networks (GNNs) \cite{Wu2020Comprehensive}, have been developed to address different data structures. In practice, the networks, such as the well-known residual network (ResNet) \cite{he2016deep,he2016identity}, include skip connections, which is standard for almost all the best models, such as the remarkable AlphaFold network for protein structure prediction \cite{jumper2021highly}.
  Motivated by the combined ResNet and forward Euler method for discretizing dynamical systems \cite{Weinan2017A}, novel networks such as neural ordinary differential equations (ODE) \cite{Chen2018Neural}, PolyNet\cite{zhang2017polynet}, FractalNet\cite{larsson2017fractalnet}
  and RevNet \cite{lu2018beyond},
  have been proposed, and they have extended the structure library of neural networks.
  
  For the dynamical system $\dot{x}(t) = v(x(t),t)$ with initial value $x(0)=x_0 \in \mathbb{R}^d$, the traditional forward Euler discretization approach with time step $\Delta t$ is
  \begin{align}\label{eq:forward_Euler}
      x_{k+1} = x_k + \Delta t v(x_k, t_k)
      =: x_k + F(x_k; \Theta_k),
  \end{align}
  where $x_k$ approximates the state $x(k\Delta t)$ provided that the time step $\Delta t$ is sufficiently small. ResNet can be regarded as an approximation of (\ref{eq:forward_Euler}), where the right-hand term $F(x_k; \Theta_k)$ is approximated by neural networks \cite{Weinan2017A,Sander2022Residual}, called residual blocks with parameter $\Theta_k$. For example, taking each residual block as a single-hidden-layer feedforward network with tanh activation, the following ResNet can be obtained:
  \cite{Weinan2017A,haber2017stable,lu2018beyond}
  \begin{align}
      x_{k+1} = x_k + S_k \tanh (W_k x_k + b_k),
  \end{align}
  where $\Theta_k = (S_k,W_k,b_k)\in \mathbb{R}^{d \times N}\times \mathbb{R}^{N\times d} \times \mathbb{R}^N$ denotes the control parameters and
  $N$ is the number of hidden neurons. Theoretically, the flow map of a dynamical system has the property of homeomorphism. However, maintaining this property in a ResNet model is not easy in practice since the Lipschitz coefficient of each residual block needs to be constrained \cite{Behrmann2019Invertible}. To overcome this drawback, \cite{Dinh2014Nice,Dinh2016Density,teshima2020coupling} improved the network model by introducing NICE \cite{Dinh2014Nice} and NVP \cite{Dinh2016Density}, which are reversible and naturally maintain the property of homeomorphism.
  
  The vanilla feedforward network, which has the simplest structure, can maintain the property of homeomorphism, but this network does not receive the attention it deserves. In fact, for a feedforward network with a width equal to the input (and the output) dimension $d$, if the activation function is strictly monotonic and continuous (such as the leaky-ReLU activation) and the weight matrix of each layer is nonsingular, then the map from the input to the output is homeomorphic. In this paper, we focus on this configuration and the natural question below:
  \begin{quotation}
      \emph{Can we use feedforward neural networks with width $d$ to approximate the flow map of a dynamical system in $d$ dimensions?}
  \end{quotation}
  
  \textbf{Contribution.} This paper constructs leaky-ReLU {\changes neural networks} and answers the above question in the affirmative. Our construction is based on a proper splitting method to solve ODEs and an observation that the composition of leaky-ReLU activation and linear functions can approximate monotonic functions. The results imply that leaky-ReLU neural networks with width $d$ (the dimension of the input space) are a universal approximator for the flow map of dynamical systems.
  The significant point is that we do not need to increase the hidden dimension to achieve a universal approximation for flow maps.
  This indicates that residual-type networks, such as ResNet, are not unique structures corresponding to dynamical systems.
  In addition, an FNN has the simplest structure that preserves homeomorphic properties.
  These findings could deepen our understanding of the property of vanilla FNNs.
  
  \textbf{Related works.}
  In this paper, the discretization of dynamical systems is expressed by universally approximating their flow maps. Therefore, our results are related to the universal approximation property (UAP) for a broader function class. Studying the UAP of neural networks is fundamental for deep learning and has a long history. It is well known that FNNs are universal approximators for continuous functions \cite{Cybenkot1989Approximation,Hornik1989Multilayer,Leshno1993Multilayer}. As a consequence, the universal approximating flow maps are obvious. However, the traditional FNN UAP is applied to wide networks that do not preserve the homeomorphism property of flow maps.
 {\changes
 To maintain this property, NICE model \cite{Dinh2014Nice}, NVP model \cite{Dinh2016Density} and various invertible architectures \cite{Chen2018Neural,Chang2018Reversible,lu2018beyond,Behrmann2019Invertible,teshima2020coupling} are proposed. In addition, the UAP of ResNet and flow maps of ODEs are also investigated \cite{Lin2018Resnet,Li2022Deep,Tabuada2023Universal}. Different from these models, our configuration focuses on FNNs with a narrow width that is equal to the input and output dimensions to potentially preserve the homeomorphism property.
 }
  
 {\changes
 Recently, determining the minimal width of FNNs, \emph{i.e.} the minimum number of neurons per layer, for the UAP has attracted much attention \cite{Lu2017Expressive,Hanin2018Approximating,Johnson2019Deep,Kidger2020Universal,Park2021Minimum,Beise2020Expressiveness}. 
{\changess
 The goal of these studies is to provide upper and lower bounds on the minimum width of the networks.
} 
Generally, the lower bounds are obtained by providing counter-examples that can not be well approximated and the upper bounds are given by specific constructions. The construction in the above works can be summarized as some encoding schemes where additional dimensions (or neurons) are needed \cite{Park2021Minimum}. The construction in this paper is significantly different as we do not introduce any additional dimensions. 
  }

For continuous scalar functions on a compact domain in $\mathbb{R}^d$ and networks with monotone and continuous activations (such as the well-known ReLU and leaky-ReLU functions), the minimal width of the uniform UAP is $w_{\min} \ge d+1$ \cite{Johnson2019Deep}. This indicates that $d$ is a corner width, which prevents the UAP. The results can be understood from the property of level sets of a target function \cite{Hanin2018Approximating,Johnson2019Deep}, which is heavily related to topology theory. Our results go a step further and examine the approximation power of these networks, especially the leaky-ReLU {\changes neural networks}.

  \textbf{Outline.} We state the main result for leaky-ReLU networks in Section \ref{sec:main}, which includes ideas and key lemmas to derive the main result. Section \ref{sec:proof} provides a formal proof of the theorems, where the subsections correspond to the ideas listed in Section \ref{sec:main}. {\changes In Section \ref{sec:discussion} we discuss networks with more general activation functions. Finally, a brief conclusion is made in Section \ref{sec:conclusion}.}
  
  \section{\changess{Notations and main results}}
  \label{sec:main}
  
  \subsection{\changess{Neural networks}}
  
  An FNN consists of an input layer, hidden layers, and an output layer, which is the simplest neural network structure. Given an input $x  \in \mathbb{R}^{d}$ and by defining the output $h^{[l]} \in \mathbb{R}^{n_l}$ of the $l$-th layer (also the input of the $(l+1)$-th layer), we can define the following neural network:
  \begin{align}
      h^{[0]} & = x  \in \mathbb{R}^{n_0}, n_0 = d,\\
      z^{[k]} & = W^{[k]} h^{[k-1]} + b^{[k]} \in \mathbb{R}^{n_k}, \\
      h^{[k]} &= \sigma(z^{[k]}) \in \mathbb{R}^{n_k}, k=1,2,\dots ,L.
  \end{align}
  where $W^{[k]}\in \mathbb{R}^{n_{k-1} \times n_k}$ and $b^{[k]}\in \mathbb{R}^{n_k}$ are defined as the weights and biases of the $k$-th layer, and $\sigma$ is the activation function that operates on the argument in an elementwise manner. We use the preactivated state $z^{[L]}$ as the final output of the network and consider it as a function of $x$ denoted by $f_L(x) := z^{[L]}$. By this notation, we obtain the following recursive relations:
  \begin{align}
      f_{0}(x) &= W^{[0]} x+b^{[0]}, \\
      f_{k}(x) &= W^{[k]} \sigma \left(f_{k-1}(x)\right) + b^{[k]}, k=1,2,\cdots,L.
  \end{align}
  
  In this paper, we consider the case of equal-width layers, including the input and output layers; \emph{i.e.} $n_0=n_1=...=n_L=d$. In addition, we focus on leaky ReLU activation (first introduced by \cite{maas2013rectifier}), which is $\sigma_{\alpha}(x), \alpha \in (0,1) \cup (1,+\infty),$
  \begin{align}
      \sigma(x) \equiv \sigma_{\alpha}(x) = 
      \begin{cases}
          x, &x > 0, \\
          \alpha x , & x \leq 0.
      \end{cases}
      % \max(\alpha x,x),\alpha \in (0,1).
  \end{align}
  With this setup, the networks are characterized by the depth $L$ and the parameters $W$ and $b$. We denote the set of all $f_L(x)$ with such networks by $\mathcal{N}_d(L)$ and denote
  \begin{align}
      \mathcal{N}_d = \bigcup_{L=0}^\infty \mathcal{N}_d(L).
  \end{align}
  
  \subsection{\changess{Dynamical systems}}
  
  Another object in this paper is the following ODE system in dimension $d$:
  \begin{align}\label{eq:ODE_general}
      \left\{
      \begin{aligned}
      &\dot{x}(t) = v(x(t),t), t\in(0,\tau),\\
      &x(0)=x_0 \in \mathbb{R}^d.
      \end{aligned}
      \right .
  \end{align}
  The flow map from $x_0$ to $x(\tau)$ is denoted by $\phi^\tau(x_0)$. Although the flow map $\phi^\tau$ is defined on the whole space, in this paper, we mainly consider its constraint in a compact domain $\Omega$ in $\mathbb{R}^d$. If the field function $v(x(t),t)$ is expressed by a neural network then we call the ODE in (\ref{eq:ODE_general}) a neural ODE. To avoid the discussion on the existence and regularity of the flow map, we introduce the following mild assumption on the field function in (\ref{eq:ODE_general}).
  \begin{assumption}\label{th:assumption}
  The field function $v(x,t)$ is bounded, Lipschitz continuous with $x$ in $\mathbb{R}^d$ and piecewise smooth w.r.t. $t\in(0,\tau)$ (in each piece, $v(x,t)$ is continuous w.r.t $(x,t)$).
  \end{assumption}
  
  Note that we introduce the boundedness assumption on $v(x,t)$ just to make the proofs in this paper more concise. If $v(x,t)$ is unbounded on $\mathbb{R}^d$, we can modify it to be bounded without affecting the flow map on the compact domain $\Omega$ that we considered.
  
  \subsection{Feedforward neural \changess{networks} as a discretization of dynamical systems}
  
  Here, we formally state our main theorem.
  
  \begin{theorem}\label{th:main}
  
  Let $\phi^\tau$ be the flow map of the dynamical system in (\ref{eq:ODE_general}) with assumption \ref{th:assumption}, and $x$ is in a compact domain $\Omega \subset \mathbb{R}^d$.
  Then, for any $\varepsilon>0$ and $\tau>0$, there exists a leaky-ReLU network $f_L(x)$ with width $d$ and depth $L$ such that
  $\|f_L(x)-\phi^\tau(x)\| \le \varepsilon$
  for all $x \in \Omega$.
  \end{theorem}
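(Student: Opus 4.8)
The plan is to factor $\phi^\tau$, up to arbitrarily small uniform error on $\Omega$, into a long composition of \emph{elementary} maps, each of which alters a single coordinate monotonically, and then to realize every elementary map by a width-$d$ leaky-ReLU subnetwork; concatenating these subnetworks (the depths add, and $\mathcal{N}_d$ is closed under composition) yields the required $f_L$. Before anything else I would fix a compact $\tilde\Omega \supset \Omega$ large enough to contain every trajectory and every intermediate iterate that will appear — the Lipschitz hypothesis on $v$ together with Gr\"onwall's inequality makes this possible uniformly — and then translate coordinates so that $\tilde\Omega$ lies in the open positive orthant. This translation matters because $\sigma_\alpha$ acts as the identity there, so ``passive'' coordinates can be carried through a layer untouched.

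\emph{Step 1 (splitting).} Using that $v$ is piecewise smooth in $t$, partition $(0,\tau)$ so that $v$ is continuous on each subinterval; on each piece freeze $t$ at the left endpoint and split the autonomous field $v(\cdot,t_k) = \sum_{i=1}^d v_i(\cdot,t_k)e_i$ into its $d$ coordinate components. The Lie--Trotter composition $\Phi^{\Delta t}_{d,k}\circ\cdots\circ\Phi^{\Delta t}_{1,k}$, where $\Phi^{\Delta t}_{i,k}$ is the time-$\Delta t$ flow of $x \mapsto v_i(x,t_k)e_i$, is a consistent first-order approximation of the step flow, so composing over all steps approximates $\phi^\tau$ uniformly on $\tilde\Omega$ as $\Delta t \to 0$ (standard consistency/stability of splitting methods, with the Lipschitz bound supplying stability). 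Each factor $\Phi^{\Delta t}_{i,k}$ is a homeomorphism that leaves every coordinate but the $i$-th fixed and is strictly increasing in the $i$-th; this is the elementary map we must build. (If one prefers to reduce further to genuine ``ridge shears'', one can first approximate each $v_i(\cdot,t_k)$ by a finite sum of one-dimensional profiles and split again, but this is not essential.)

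\emph{Step 2 (realization by leaky-ReLU layers).} Here I would invoke the observation highlighted in the introduction — that compositions of $\sigma_\alpha$ with affine maps approximate arbitrary continuous monotone scalar functions — together with the algebraic device of conjugating a linear shear by coordinate-wise monotone maps so as to move a prescribed nonlinearity into one coordinate while the others ride along in the remaining $d-1$ slots. A map replacing the $i$-th coordinate by an increasing function of it (possibly parametrized by the rest) needs only the $d-1$ passive coordinates plus one slot for the new value, i.e.\ exactly $d$ — so there is, in principle, just enough room; combining this with keeping the passive coordinates positive so that they pass through each $\sigma_\alpha$ unchanged, and letting the affine layers do the coordinate mixing, one assembles a width-$d$ leaky-ReLU network that approximates $\Phi^{\Delta t}_{i,k}$ on $\tilde\Omega$. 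I expect this step to be the main obstacle: with the width pinned at $d$ there is no spare neuron in which to ``hold $x$ while computing $v(x)$'' the way a residual block can, so the construction must exploit that the elementary maps are invertible — no information is ever discarded — and that the nonlinearity needed at each stage is essentially one-dimensional; making the approximation errors uniform over all the elementary maps is the technical heart.

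\emph{Step 3 (assembly).} Finally I would choose $\Delta t$ small enough that Step 1 contributes error at most $\varepsilon/2$ on $\tilde\Omega$; then, using the Lipschitz bounds to turn per-factor errors into a controlled total error, approximate each of the finitely many factors $\Phi^{\Delta t}_{i,k}$ by a leaky-ReLU subnetwork closely enough that Step 2 contributes at most $\varepsilon/2$; compose all the subnetworks in order, prepend and append the affine maps that carry $\Omega$ into and out of the positive orthant, and read off the resulting depth $L$. Since every piece lies in $\mathcal{N}_d$ and $\mathcal{N}_d$ is closed under composition, the result is a single width-$d$ leaky-ReLU network with $\|f_L(x)-\phi^\tau(x)\|\le\varepsilon$ on $\Omega$, which is the assertion of Theorem~\ref{th:main}.
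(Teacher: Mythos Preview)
Your overall architecture---split the flow into elementary ``shear'' maps and realize each by a width-$d$ leaky-ReLU subnetwork---matches the paper's strategy.  But there is a real gap in Step~2, and it is precisely the point you flag as the obstacle and then wave away.

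Your elementary factor $\Phi^{\Delta t}_{i,k}$ replaces $x^{(i)}$ by $x^{(i)} + \Delta t\, v_i(x,t_k)$, where $v_i$ is a \emph{general} function of all $d$ coordinates.  You then claim this can be built in width $d$ because ``the nonlinearity needed at each stage is essentially one-dimensional.''  That is false for a generic $v_i$: with all $d$ slots committed to storing $x^{(1)},\dots,x^{(d)}$, there is no way to form a nonlinear combination of them without destroying one of them---and invertibility alone does not tell you how to get it back.  The positive-orthant trick lets passive coordinates survive a $\sigma_\alpha$, but it does not manufacture a neuron in which to compute $v_i(x)$.  The paper resolves this by \emph{first} approximating the field by a tanh network, so that after splitting each elementary step is a genuine ridge shear
\[
x^{(j)} \;\longmapsto\; x^{(j)} + a\,\Delta t\,\tanh(w\cdot x + \beta).
\]
Now the nonlinearity really is one-dimensional: a single linear layer collapses $w\cdot x+\beta$ into one slot (overwriting, say, $x^{(1)}$), a scalar monotone map applied there produces $\tanh(\nu)$, a linear shear deposits it in slot $j$, and---this is the clever part---the overwritten $x^{(1)}$ is recovered by applying $\operatorname{arctanh}$ and then the monotone map $\nu\mapsto \nu + w_j a\,\Delta t\tanh(\nu)$ (monotone because $\Delta t$ is small), followed by one more linear step.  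Every nonlinear stage acts on a single coordinate and is handled by Lemma~\ref{th:main_1d}.

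So the step you parenthetically dismiss---``reduce further to genuine ridge shears \dots\ but this is not essential''---is exactly the missing idea.  Without it you have not explained how to realize your $\Phi^{\Delta t}_{i,k}$ in width $d$, and I do not see an alternative mechanism in your sketch.  If you reinstate that reduction (approximate $v$ by a one-hidden-layer network, then split over neurons as well as coordinates), your outline becomes essentially the paper's proof.
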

  
  A detailed proof is provided in the next section. Here, we provide a sketch of the proof.
  Let us begin with the one-dimensional case, $d=1$. In this case, the flow map $\phi^\tau(x)$ is a monotonic and continuous function, and the network with depth $L$ is
  \begin{align}
      f_{L}(x)=w_{L} \sigma_\alpha (
          w_{L-1} \sigma_\alpha ( ... (w_0 x + b_0) ...) + b_{L-1}
          )+b_{L},
          w_{i}, b_{i} \in \mathbb{R}.
  \end{align}
  It is easy to verify that $f_L(x)$ is monotonic and continuous. With a detailed characterization of the leaky-ReLU networks in $\mathcal{N}_1$, we can prove the inverse result that any monotonic continuous function on a bounded interval $I$ (such as $I=[0,1]$) can be approximated by functions in $\mathcal{N}_d$. This is the lemma below, which is important for our construction of high-dimensional cases.
  \begin{lemma}\label{th:main_1d}
  For any monotonic function {\changes $u \in C(I,\mathbb{R})$} and $\varepsilon>0$, there is a leaky-ReLU neural network $f_L(x)$ with depth $L$ and a width of one such that $\|f_L(x) - u(x)\| \le \varepsilon$ for all $x\in I$.
  \end{lemma}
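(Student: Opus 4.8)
The plan is to reduce the approximation of an arbitrary monotonic continuous function to approximating a single "building block" and then composing many such blocks. Without loss of generality assume $f^*$ is monotonically increasing (the decreasing case follows by composing with $x \mapsto -x$, which is realized by a single layer with $w=-1$, $b=0$, noting $\sigma_\alpha$ is odd-homogeneous up to swapping the slopes $\alpha \leftrightarrow 1$). The first step is to understand what a single leaky-ReLU layer in one dimension does: the map $x \mapsto w_1 \sigma_\alpha(w_0 x + b_0) + b_1$ is a continuous, strictly monotonic, piecewise-linear function with exactly one breakpoint (at $x = -b_0/w_0$) and two distinct slopes whose ratio is $\alpha$ or $1/\alpha$. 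Composing $L$ such layers yields a continuous piecewise-linear strictly monotonic function with at most $L$ breakpoints; the key flexibility is that we can place breakpoints anywhere and, crucially, by chaining we can realize slope ratios that are products of powers of $\alpha$, hence a dense set of positive ratios.

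The second step is the explicit construction. Given $\varepsilon > 0$, first approximate $f^*$ uniformly on $I$ by a continuous piecewise-linear increasing function $g$ with finitely many breakpoints $a_0 < a_1 < \cdots < a_n$ (endpoints of $I$ included), with $\|g - f^*\|_\infty \le \varepsilon/2$; this is standard since $f^*$ is uniformly continuous. It then suffices to represent $g$ exactly, or up to $\varepsilon/2$, by a network in $\mathcal{N}_1$. I would build $g$ incrementally: think of $g$ as a composition $g = g_n \circ g_{n-1} \circ \cdots \circ g_1 \circ (\text{affine})$, where each $g_j$ is a simple increasing piecewise-linear map that "installs" one more kink. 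Concretely, a map with a single breakpoint that changes the slope by a factor $\rho > 0$ can be obtained from one leaky-ReLU layer if $\rho \in \{\alpha, 1/\alpha\}$; for a general ratio $\rho$, I would use the sub-lemma that a short chain of leaky-ReLU layers (all sharing the same breakpoint location in the appropriate coordinate, or arranged so intermediate kinks fall outside the working interval) can approximate multiplication of the right-hand slope by any $\rho>0$ to arbitrary accuracy — this is exactly the "composition of leaky-ReLU activation and linear functions can approximate monotonic functions" observation flagged in the introduction. Accumulating $n$ such blocks, with errors controlled geometrically, produces a network whose output is within $\varepsilon/2$ of $g$ on $I$, hence within $\varepsilon$ of $f^*$.

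The third step is bookkeeping: one must ensure that when installing the $j$-th kink, the affine post-processing does not disturb the already-correct behavior on $[a_0,a_{j-1}]$ and that auxiliary kinks introduced by the slope-ratio gadget land outside the image of $I$ under the partial composition (or are themselves absorbed). This is routine once the single-kink gadget is in hand, because each $g_j$ can be taken to be the identity on an initial segment and affine-with-one-new-slope afterward.

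The main obstacle — and the technical heart of the argument — is the slope-ratio gadget: showing that compositions of leaky-ReLU layers realize (or densely approximate) one-dimensional increasing piecewise-linear maps with arbitrary prescribed breakpoints and arbitrary positive slope ratios, using only the two fixed slopes $\alpha$ and $1$ available per layer. The point is that slope ratios multiply under composition, so one needs a quantitative statement that products of the form $\alpha^{k}$ (combined with freely chosen positive affine rescalings, which contribute slope factors that cancel globally but can be used locally) are dense enough, and that the extra breakpoints this introduces can be pushed harmlessly away. Everything else — the piecewise-linear approximation of $f^*$, the monotonicity and continuity of networks in $\mathcal{N}_1$, and the error accounting — is elementary.
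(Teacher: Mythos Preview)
Your overall strategy matches the paper's: approximate $f^*$ by a piecewise-linear monotone function, then realize that PL function by a width-one leaky-ReLU network. The paper packages this as (i) a characterization lemma showing $\mathcal{N}_1$ coincides exactly with the class of \emph{$\alpha$-power PL functions} (continuous PL functions all of whose slopes lie in $\{c\alpha^k:k\in\mathbb{Z}\}$ for a single constant $c$), and (ii) an approximation lemma showing every monotone $u\in C[-1,1]$ is uniformly close to some $\alpha$-power PL function.

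The gap in your proposal is precisely at the point you flag as the ``technical heart'': your slope-ratio gadget does not work as described. If several leaky-ReLU layers share the same breakpoint, the resulting slope ratio across that point is exactly $\alpha^k$ for some integer $k$; the set $\{\alpha^k:k\in\mathbb{Z}\}$ is discrete, not dense, so you cannot approximate an arbitrary ratio $\rho>0$ this way. Your alternative of pushing auxiliary kinks outside the working interval is also not viable, since the whole purpose of those kinks is to change the slope \emph{inside} the interval. The paper resolves this with a different, concrete idea: rather than trying to hit an arbitrary slope $k_i$ on $[x_i,x_{i+1}]$, one \emph{folds} each linear piece into two sub-pieces with slopes $C\alpha^{j_1}\le k_i< C\alpha^{j_2}$, choosing the interior breakpoint so that the two-piece function agrees with $h$ at both endpoints $x_i,x_{i+1}$. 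Since all three functions are increasing and agree at the endpoints, the uniform error on that piece is at most $h(x_{i+1})-h(x_i)$, which was already made $\le\varepsilon/4$ in the PL approximation step. Concatenating these folded pieces gives an $\alpha$-power PL function, which by the characterization lemma is exactly a leaky-ReLU network (using that $\sigma_{\alpha^p}$ is a composition of copies of $\sigma_\alpha$, including negative $p$ via $\sigma_{\alpha^{-p}}(x)=-\alpha^p\sigma_\alpha^{\circ p}(-x)$). So the missing idea is: don't try to realize arbitrary slopes, instead introduce one extra breakpoint per segment and use two $\alpha$-power slopes that bracket the target.
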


  {\changes
  \begin{remark} \label{remark:L_bound_1d}
        If $u \in C^2(I,\mathbb{R})$ is a smooth and strictly monotonic function, then the required depth $L$ can be bounded by $L \le \tfrac{V(u)}{\varepsilon} + \tfrac{V(\ln u')}{|\ln \alpha|}$ where $V(u)$ and $V(\ln u')$ means the total variation of $u$ and $\ln u'$, respectively. 
  \end{remark}
  }
  
  For high dimension cases, that is, when $d\ge 2$, our construction requires three steps, where the first two steps are almost standard. The ideas are also illustrated in Figure~\ref{fig:main_figure}. 
  
  \textbf{Step 1. Approximate the dynamical system by neural ODEs.}
  We first use neural ODEs to approximate the original dynamical system (\ref{eq:ODE_general}). This procedure is standard, as we can uniformly approximate the field $v(x,t)$ by single-hidden layer networks. We can control the approximation of $v(x,t)$ such that the approximation error for $\phi^\tau(x)$ is sufficiently small. Particularly, we approximate $v(x,t)$ by the following tanh network $\tilde v(x,t)$ with $N$ hidden neurons:
  \begin{align} \label{eq:specified_v}
      \tilde v(x,t) = 
      \sum\limits_{i=1}^N a_i(t)\tanh(w_i(t) \cdot x+b_i(t))
      \equiv
      A(t)\tanh(W(t)x(t)+b(t)),
  \end{align}
  where $(A,W,b)\in \mathbb{R}^{d \times N} \times \mathbb{R}^{N\times d} \times \mathbb{R}^N$ denotes the piecewise smooth functions of $t$ and $a_i$, $w_i$ and $b_i$ are the $i$-th row of $A,W$ and $b$, respectively.
  
  \textbf{Step 2. Solve the neural ODE by a proper splitting method.}
  Note that our aim is now to use a leaky-ReLU FNN to approximate the flow map of the neural ODE:
  \begin{align}\label{eq:ODE_neural}
      \dot x(t) = \tilde{v}(x(t),t).
  \end{align}
  A proper numerical scheme is desired to discretize the neural ODE (\ref{eq:ODE_neural}), which should benefit the construction in the next step. 
  {\changes 
  Unlike the ResNet, which corresponds to the forward Euler scheme solving the neural ODE (\ref{eq:ODE_neural}), we employ the splitting method before using the forward Euler scheme.
  } 
  Matrix $A$ in (\ref{eq:specified_v}) is split into $A(t) = \sum_{i=1}^N \sum_{j=1}^d a_{ij}(t) E_{ij},$
  where $a_{ij}$ is the $(i,j)$ element of $A$ and $E_{ij}$ is a matrix that is the same size as $A$ and has 1 as its only nonzero element at $(i,j)$. Then, the field $\tilde v$ can be split as a summation of $Nd$ functions,
  \begin{align}\label{eq:splitting_of_tilde_v}
      \tilde v(x,t) = \sum_{i=1}^N \sum_{j=1}^d a_{ij}(t) \vec{e}_j \tanh(w_i(t) \cdot x+b_i(t))
      \equiv
      \sum_{i=1}^N \sum_{j=1}^d \tilde v_{ij}(x,t) \vec{e}_j,
  \end{align}
  where $\vec{e}_j$ is the $j$-th axis unit vector and
  $$\tilde v_{ij}(x,t) = a_{ij}(t) \tanh(w_i(t) \cdot x+b_i(t))$$
  is a scalar function.
  For a given time step $\Delta t$, the $k$-th iteration is defined as
  \begin{align}\label{eq:main_iteration_Tk}
      x_{k+1} = T_k x_k = T_k^{(N,d)} \circ \dots \circ T_k^{(1,2)} \circ T_k^{(1,1)} x_k,
  \end{align}
  where the map $T_k^{(i,j)}: x \to y$ in each split step is
  \begin{align}\label{eq:map_T}
      \left\{
      \begin{aligned} 
      & y^{(l)} = x^{(l)} , l \neq j,  \\
      & y^{(j)} = x^{(j)} + \Delta t \tilde v_{ij}(x,t_k), t_k = k\Delta t.
      \end{aligned}
      \right.
  \end{align}
  Here, the superscript in $x^{(l)}$ indicates the $l$-th coordinate of $x$. Employing some standard numerical analysis techniques, we can prove that $x_k$ approximates $x(k \Delta t)$ arbitrarily, provided $\Delta t$ is sufficiently small and $k \Delta t \le \tau$.
  
  \textbf{Step 3. Approximate each split step by an FNN.}
  Our key observation is that all maps $T_k^{(i,j)}$ in (\ref{eq:map_T}) can be approximated by leaky-ReLU networks. If $\Omega_k$ is a compact set and $\Delta t$ in map $T_k^{(i,j)}$ is sufficiently small, then for any $\varepsilon>0$, there is a leaky-ReLU network $\mathcal{T}_k \in \mathcal{N}_d$ such that
  \begin{align}
      \|T_k(x) - \mathcal{T}_k(x)\| \le \varepsilon, \forall x \in \Omega_k.
  \end{align}
  The detailed construction is shown in Section 3.5. The basic observation is that $\tanh(\nu)$ and $\nu + \Delta t C \tanh(\nu)$ are strictly monotonic functions of $\nu \in \mathbb{R}$, provided that $C$ is a fixed constant and $\Delta t$ is sufficiently small. Then, they can be approximated by leaky-ReLU networks with a width of one according to the one-dimensional result. The specific structure of $T_k^{(i,j)}$ is designed so that we can approximate it by the composition of a linear transformation and (one-dimensional) leaky-ReLU activations.
  
  Combining the three steps above, we are now in a position to show that the FNN $f_L(x)$ can approximate the flow-map $\phi^\tau(x)$ of the dynamical system in (\ref{eq:ODE_general}) up to any accuracy, where $\tau = n \Delta t$, $n\in \mathbb{Z}^+$ and
  \begin{align}
      f_L = \mathcal{T}_n \circ \cdots \circ \mathcal{T}_2 \circ \mathcal{T}_1 \in \mathcal{N}_d.
  \end{align}
  In addition, the constructed $f_L(x)$ is naturally a homeomorphic map, which is a key property of flow maps.
  Furthermore, if we are interested in the whole trajectory of $x(t)$, discretized as $x(k \Delta t)$, instead of the final flow-map, we can extract a subsequence of the hidden state of $f_L(x)$, denoted by $f_{L_k}(x)$, such that $f_{L_k}(x)$ approximates $x(k \Delta t)$ well. From this sentence, we concluded that a vanilla FNN could be used to discretize dynamical systems.
  
  \begin{figure}[t]
      \centering
      \includegraphics[width=12cm]{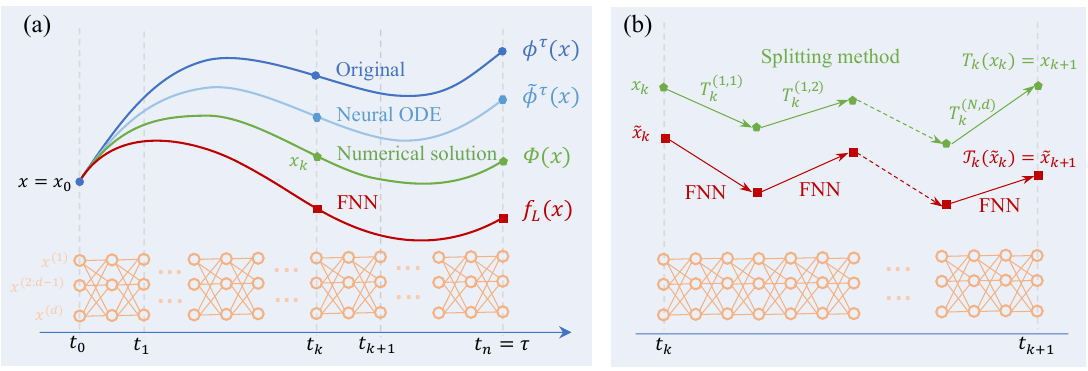}
      \caption{Approximating the flow map $\phi^\tau$ by an FNN $f_L$. (a) Illustration of the three steps in our proof. (b) Numerically solving the neural ODE by a splitting method, where each split step can be approximated by an FNN.}
      \label{fig:main_figure}
  \end{figure}

  {\changes
  \begin{remark}
        The mappings $T_k^{(i,j)}$ in (\ref{eq:map_T}) can be viewed as simple residual blocks, containing only one neuron and modifying only one coordinate, of a ResNet. Therefore, our results can be extended to residual networks, i.e., ResNets with only one neuron in each layer can approximate flow maps. This is a strong complement to an existing work \cite{Lin2018Resnet} which demonstrated that ResNets with one-neuron hidden layers can approximate scalar Lebesgue-integrable functions. Our results imply that for flow maps, the approximation can be achieved under the uniform norm that is stricter than the $L^1$-norm.
  \end{remark}
  }
  
  \section{Proof of the results}
  \label{sec:proof}
  
  \subsection{Preliminary properties of leaky-ReLU}
  \label{sec:preliminary_properties}
  
  Leaky-ReLU functions and their compositions are piecewise linear (PL) functions. To simplify the proofs in this paper, we introduce a notation for (scalar) PL functions.
  
  \begin{definition}[Piecewise linear (PL) functions]
  We denote a PL function with a finite number (denoted by $n$) of breakpoints by $f(x)=f(x;\Theta)$, $x\in \mathbb{R}$, and $\Theta=(B,S,V)$, in which $B (\in \mathbb{R}^n)$ represents the sequence of breakpoints, $S (\in \mathbb{R}^{n+1})$ represents the slope of each linear part of $f(x)$, and $V (\in \mathbb{R}^{n})$ represents the values of $f(x)$ at $x\in S$.
  \end{definition}
  We denote $B(f), S(f), V(f)$ as the parameter $B,S,V$ of $f(x)$.
  
  \begin{definition}[$\alpha$-power PL functions]
  We call a PL function $g(x)$ an $\alpha$-power PL function {\changess ($\alpha>0$) }with a constant $c \in \mathbb{R}$; if $S(g) \subset \{ c \alpha^k,k \in \mathbb{Z}\}$ for some constant $c$, \emph{i.e.,} the slopes of each piece are proportional to the power of $\alpha$.
  \end{definition}
  
  Note that leaky ReLU has some useful properties, which are listed in the following proposition.
  
  \begin{proposition} \label{prop:sigma_alpha_p}
  The leaky ReLU functions, $\sigma_\alpha$, have the following properties:
  \begin{itemize}
  \item[1)] Positive-homogeneous,
  $$\sigma_\alpha(a x) = a \sigma_\alpha(x), \forall a>0.$$
  \item[2)] Representation of identity,
  $$\frac{1}{\alpha} \sigma_\alpha(-\sigma_\alpha(-x)) = x.$$
  Furthermore, for $p\in\mathbb{Z}^+$,
  $$ \sigma _{\alpha^p} (x)=\sigma _{\alpha} \circ \cdots \circ \sigma _{\alpha} (x),$$
  $$\sigma _{\alpha^{-p}} (x) = -\alpha^{p}\sigma_{\alpha} \circ \cdots \circ \sigma_{\alpha} (-x),$$
  where $\sigma_{\alpha}$ appears $p$ times.
  \end{itemize}
  \end{proposition}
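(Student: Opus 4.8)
The plan is to reduce every assertion to the piecewise description $\sigma_\alpha(x)=x$ for $x\ge 0$ and $\sigma_\alpha(x)=\alpha x$ for $x<0$ (equivalently $\sigma_\alpha(x)=\max(\alpha x,x)$, since $0<\alpha<1$), and then to argue by a case split on the sign of the argument together with induction on $p$. No result beyond the definition of the activation is needed.

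The first property, positive homogeneity, needs no case split at all: for $a>0$ the dilation $t\mapsto at$ commutes with $\max$, so $\sigma_\alpha(ax)=\max(\alpha a x,a x)=a\max(\alpha x,x)=a\sigma_\alpha(x)$. I would immediately record the corollary that $\sigma_\alpha$ maps $[0,\infty)$ into itself and $(-\infty,0)$ into itself; this \emph{sign preservation} is essentially the only structural fact the remaining parts use.

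For the identity representation I would evaluate the composition from the inside out on each sign branch. If $x\ge 0$ then $-x\le 0$, so $\sigma_\alpha(-x)=-\alpha x$, hence $-\sigma_\alpha(-x)=\alpha x\ge 0$ and $\sigma_\alpha(-\sigma_\alpha(-x))=\alpha x$; if $x<0$ then $-x>0$, so $\sigma_\alpha(-x)=-x$, hence $-\sigma_\alpha(-x)=x<0$ and $\sigma_\alpha(-\sigma_\alpha(-x))=\alpha x$. In both cases the value is $\alpha x$, and dividing by $\alpha$ gives the claim. For the composition identities I would induct on $p$ (writing $\sigma_\alpha^{\circ p}$ for the $p$-fold composition): the base case $p=1$ is the definition, and in the inductive step sign preservation lets me write $\sigma_\alpha^{\circ(p+1)}(x)=\sigma_\alpha(\sigma_{\alpha^p}(x))$, which is $\sigma_{\alpha^p}(x)=x$ when $x\ge 0$ and, using positive homogeneity on the negative branch, $\alpha\cdot(\alpha^p x)=\alpha^{p+1}x$ when $x<0$ — precisely $\sigma_{\alpha^{p+1}}(x)$. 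The negative-power identity then follows by inserting $-x$ into the $p$-fold composition just computed, multiplying by the indicated scalar, and checking the two sign branches of $x$ against the piecewise form of $\sigma_{\alpha^{-p}}(x)=\max(\alpha^{-p}x,x)$, with positive homogeneity once more absorbing the scalar factor on each branch.

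I do not expect a genuine obstacle, since the proposition is elementary; the only thing that wants a little care is keeping the sign flips and the power of $\alpha$ straight in the negative-power identity, and verifying that the breakpoint at $x=0$ is treated consistently (it is, because $\sigma_\alpha(0)=0$ under either branch).
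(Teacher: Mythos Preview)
Your plan is exactly the paper's own argument---direct case-by-case evaluation from the piecewise definition---and in fact the paper's entire proof is the single sentence ``The properties are obvious by a direct evaluation.'' One caution on the last identity: when you actually carry out the sign-branch check you will find that the printed scalar $-\alpha^{p}$ is a typo (no reading of $\sigma_{\alpha^{-p}}$ makes it hold); the intended formula is $\sigma_{\alpha^{-p}}(x)=-\alpha^{-p}\,\sigma_\alpha^{\circ p}(-x)$ with $\sigma_{\alpha^{-p}}$ understood as the piecewise extension (slope $1$ on $x\ge 0$, slope $\alpha^{-p}$ on $x<0$, i.e.\ the inverse of $\sigma_{\alpha^{p}}$) rather than the literal $\max(\alpha^{-p}x,x)$ you wrote, so adjust your branch check accordingly.
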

  \begin{proof} The properties are obvious by a direct evaluation.
  \end{proof}
  The second property implies that the leaky-ReLU $\sigma _{\alpha^p} (x)$ with parameter $\alpha^p, p \in \mathbb{Z},$ can be represented as a composition of $\sigma _{\alpha} (x)$; hence, $\sigma _{\alpha^p} (x) \in \mathcal{N}_1$.

  \subsection{Characterizing the leaky-ReLU networks with width one}
  
  Here, we focus on the one-dimensional case, $d=1$. The following two lemmas show that leaky-ReLU networks with widths of one are exactly $\alpha$-power PL functions.
  
  \begin{proposition}\label{proposition:nn_PL}
  Leaky-ReLU networks in $\mathcal{N}_{1}(L)$ are $\alpha$-power PL functions with at most $L+1$ pieces.
  \end{proposition}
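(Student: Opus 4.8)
The plan is a single induction on the depth $L$, run simultaneously for all the intermediate functions $f_0,f_1,\dots,f_L$ (where $f_k$ is the depth-$k$ network built from the first weights and biases $w_0,\dots,w_k$, $b_0,\dots,b_k$), maintaining three invariants for $f_k$: (i) $f_k$ is either strictly monotonic or constant; (ii) $f_k$ is PL with at most $k+1$ pieces; (iii) $f_k$ is an $\alpha$-power PL function with constant $c_k:=w_0 w_1\cdots w_k$ (read as $c_k=0$ when some $w_i=0$, i.e.\ the constant case). Invariant (i) is what actually does the work: it is the reason the leaky-ReLU cannot multiply the number of breakpoints.

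For the base case $L=0$, $f_0(x)=w_0 x+b_0$ is affine, hence has one piece, is strictly monotonic or constant according to whether $w_0\neq 0$, and its single slope $w_0=c_0\alpha^0$ lies in $\{c_0\alpha^j:j\in\mathbb{Z}\}$ (with the $c_0=0$ convention when $w_0=0$). For the inductive step $k-1\to k$, write $f_k=w_k\,\sigma_\alpha(f_{k-1})+b_k$. Since $\sigma_\alpha$ is continuous and strictly increasing, $\sigma_\alpha\circ f_{k-1}$ is strictly monotonic if $f_{k-1}$ is and constant if $f_{k-1}$ is, and the affine post-composition $t\mapsto w_k t+b_k$ preserves this, giving (i). For (ii): the breakpoints of $\sigma_\alpha\circ f_{k-1}$ lie among the breakpoints of $f_{k-1}$ together with $f_{k-1}^{-1}(0)$, the set where $\sigma_\alpha$ switches linear pieces; because $f_{k-1}$ is strictly monotonic (so $f_{k-1}^{-1}(0)$ has at most one point) or constant (so it contributes nothing), $\sigma_\alpha\circ f_{k-1}$ has at most $(k-1)+1=k$ breakpoints, i.e.\ at most $k+1$ pieces, and the affine post-composition adds none. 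For (iii): on each linear piece of $f_{k-1}$ the slope of $\sigma_\alpha\circ f_{k-1}$ is that of $f_{k-1}$ times $1$ or $\alpha$ (depending on the sign of $f_{k-1}$ there), hence stays in $\{c_{k-1}\alpha^j:j\in\mathbb{Z}\}$ by Proposition~\ref{prop:sigma_alpha_p}(1) and the inductive hypothesis; multiplying by $w_k$ (adding $b_k$ does not affect slopes) turns this into $\{w_k c_{k-1}\alpha^j\}=\{c_k\alpha^j\}$. Taking $k=L$ gives the statement.

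The only delicate point, and the one I would be careful to state correctly, is the breakpoint count in (ii): it is essential that the input to every $\sigma_\alpha$ is monotonic, since an arbitrary PL input with $m$ pieces could cross $0$ up to $m$ times, and then the number of pieces would roughly double per layer rather than increase by one. The degenerate cases where some $w_i=0$ collapse $f_k$ and all later functions to constants and are absorbed by permitting $c=0$ in the definition of $\alpha$-power PL function; everything else is routine bookkeeping of slopes and breakpoints using only the two-piece structure of $\sigma_\alpha$ (breakpoint at $0$, slopes $\alpha$ and $1$) and Proposition~\ref{prop:sigma_alpha_p}.
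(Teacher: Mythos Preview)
Your proposal is correct and follows essentially the same induction as the paper: both arguments hinge on the observation that each intermediate function is strictly monotonic (or constant), so composing with $\sigma_\alpha$ introduces at most one new breakpoint (the unique zero) and multiplies existing slopes by $1$ or $\alpha$, while the outer affine map scales the constant $c$ by $w_k$. The paper phrases the inductive step as ``$g\in\mathcal{N}_1(L)$ implies $w\sigma_\alpha(g)+b$ is $\alpha$-power PL with at most $L+2$ pieces'' and invokes the intermediate value theorem to locate the unique zero, but the content is identical to your invariants (i)--(iii).
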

  \begin{proof}
  Let us prove this by induction. Consider $g(x) \in \mathcal{N}_{1}(L)$ as an $\alpha$-power PL function with a constant $c$ and at most $L+1$ pieces; we prove that $ w \sigma(g(x)) +b $ is an $\alpha$-power PL function with constant $w c$ and at most $L+2$ pieces.
  
  The proof is obvious for the case of $c=0$, \emph{i.e.,} $g(x)$ is a constant function. Next, we only need to consider the nonconstant case ($c \neq 0$), which requires all weights in the leaky-ReLU networks to be nonzero. As a consequence, $g(x)$ is continuous and strictly monotone, and its range is the whole space $\mathbb{R}$. Without loss of generality, we assume that $g(x)$ is increasing, and the decreasing case can be proven in a similar way. According to the zero-point existence theorem, there is a unique $\xi \in \mathbb{R}$ such that $g(\xi)=0$, and
  \begin{align}
      \sigma(g(x))=
      \begin{cases}
          g(x), &x > \xi, \\
          \alpha g(x) , & x \leq \xi.
      \end{cases}
      \end{align}
  Therefore, $\sigma (g(x))$ is a PL function with breakpoints $B(\sigma(g)) = \{\xi\} \cup B(g)$. The number of breakpoints increases by at most one; \emph{i.e.,}, $\sigma (g(x))$ has at most $L+2$ pieces. In addition, the slopes $S(\sigma(g)) \subset S(g) \cup \alpha S(g)$ are also $\alpha$-power functions with constant $c$. These complete the proof since $ w \sigma(g(x)) +b$ is just a linear map of $\sigma(g(x))$, which only changes the constant $c$ to $w c$ in the slope set.
  \end{proof}
  
  \begin{lemma}\label{lemma:alpha_NN}
  Any $\alpha$-power PL function $g(x)$ can be represented as a leaky-ReLU network with a width of one; \emph{i.e.} there exist a positive integer $L$ and the corresponding parameters $w_i,b_i \in \mathbb{R}$ such that
  $$g(x) = f_L(x), \forall x \in \mathbb{R}.$$
  \end{lemma}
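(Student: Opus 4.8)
The plan is to prove the lemma by induction on the number of breakpoints $n=|B(g)|$, peeling off one breakpoint at each step by writing $g$ as the composition of a shorter $\alpha$-power PL function with a single shifted and reflected leaky-ReLU. Two preliminary facts make this work. First, $\mathcal{N}_1$ is closed under composition and under pre- and post-composition with affine maps: an affine map merges into the first (resp.\ last) affine layer of a network, and the last affine layer of one network merges with the first affine layer of the next. Second, by the second part of Proposition~\ref{prop:sigma_alpha_p}, $\sigma_{\alpha^m}\in\mathcal{N}_1$ for every nonzero integer $m$ (for $m<0$ use $\sigma_{\alpha^{-p}}(x)=-\alpha^{p}\sigma_{\alpha}\circ\cdots\circ\sigma_{\alpha}(-x)$). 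I would also dispose of the degenerate cases at the outset: if $n=0$, i.e.\ $g$ is affine (in particular if the constant $c=0$, which forces $g$ to be constant), then $g\in\mathcal{N}_1$ via the identity representation $\frac{1}{\alpha}\sigma_\alpha(-\sigma_\alpha(-x))=x$, using $L=2$. When $c\neq0$ every slope $c\alpha^{k}$ has the sign of $c$, so $g$ is strictly monotone; post-composing with $z\mapsto-z$ if necessary, I may assume $g$ is strictly increasing with $c>0$.

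For the inductive step, let $g$ have breakpoints $\xi_1<\cdots<\xi_n$ ($n\ge1$) and slopes $s_0,\dots,s_n$ with $s_i=c\alpha^{k_i}$, and set $m:=k_n-k_{n-1}$, which is nonzero since $\xi_n$ is a genuine breakpoint. I would let $\tilde g$ be the PL function that agrees with $g$ on $(-\infty,\xi_n]$ and then continues with slope $s_{n-1}$ on $[\xi_n,\infty)$. Then $\tilde g$ is again $\alpha$-power with the same constant $c$, has only the breakpoints $\xi_1,\dots,\xi_{n-1}$, and is strictly increasing, so $\tilde g\in\mathcal{N}_1$ by the inductive hypothesis. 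Writing $\eta_n:=g(\xi_n)=\tilde g(\xi_n)$, I would take the slope-correction map $\psi(y):=\eta_n-\sigma_{\alpha^{m}}(\eta_n-y)$; a one-line case split shows $\psi(y)=y$ for $y\le\eta_n$ and $\psi(y)=\eta_n+\alpha^{m}(y-\eta_n)$ for $y\ge\eta_n$. Since $\tilde g$ is strictly increasing, $\tilde g(x)<\eta_n$ for $x<\xi_n$ and $\tilde g(x)>\eta_n$ for $x>\xi_n$, so the chain rule gives $(\psi\circ\tilde g)'=s_i$ on the $i$-th piece for $i<n$ and $(\psi\circ\tilde g)'=\alpha^{m}s_{n-1}=s_n$ on $(\xi_n,\infty)$, while $\psi(\tilde g(\xi_n))=\eta_n=g(\xi_n)$; hence $\psi\circ\tilde g=g$ on all of $\mathbb{R}$. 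Finally $\psi$ is an affine map composed with $\sigma_{\alpha^{m}}$ composed with an affine map, so $\psi\in\mathcal{N}_1$, and by closure under composition $g=\psi\circ\tilde g\in\mathcal{N}_1$, closing the induction.

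I do not expect a genuine obstacle here; the work is bookkeeping. The points to verify carefully are that deleting the rightmost breakpoint really produces an $\alpha$-power PL function with the \emph{same} constant and one fewer breakpoint, that the exponent difference $m$ feeding the correction map is a nonzero integer so that $\sigma_{\alpha^{m}}\in\mathcal{N}_1$ (both signs covered by Proposition~\ref{prop:sigma_alpha_p}), and that the reduction from decreasing to increasing $g$ is legitimate. The construction is essentially forced once one observes that composing on the outside with a single rescaled, reflected leaky-ReLU changes $g$ only on a terminal ray, multiplying its slope there by a power of $\alpha$. The depth $L$ this produces is finite but not sharp --- it grows with the spread of the exponents $k_i$ --- which is harmless, since the lemma only asserts existence of a finite $L$.
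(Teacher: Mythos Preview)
Your proof is correct and follows essentially the same approach as the paper: both arguments realize $g$ by successively post-composing with shifted copies of $\sigma_{\alpha^{p}}$, one per breakpoint, invoking Proposition~\ref{prop:sigma_alpha_p} to place each $\sigma_{\alpha^{p}}$ in $\mathcal{N}_1$. The only cosmetic difference is direction: you peel off the rightmost breakpoint via $\psi(y)=\eta_n-\sigma_{\alpha^{m}}(\eta_n-y)$ and induct, whereas the paper starts from the rightmost piece and adds breakpoints moving left via $\tilde f_i(x)=\sigma_{\alpha^{p_i-p_{i-1}}}(\tilde f_{i-1}(x)-g(x_i))+g(x_i)$; these are mirror images of the same construction.
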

  \begin{proof}
  Without loss of generality, we may assume $g(x)$ is a monotonically increasing function with breakpoints $B(g) = \{x_1,...,x_k\}$, decreasing $\{x_i\}$ values, slopes $S(g) = \{1,\alpha^{p_1}, ..., \alpha^{p_k}\}$, and $k \in \mathbb{N}$. Here, we assume that the slope is 1 on the interval $[x_1,+\infty)$, the slope is $a^{p_i}$ on $[x_{i+1},x_i], i=1,2,...,k-1,$ and the slope is $a^{p_k}$ on $(-\infty,x_k]$. We represent $g(x)$ by leaky-ReLU networks piece by piece.
  
  As the slope of the piece on $[x_2,x_1]$ is $\alpha^{p_1}$, we can construct a composition function
  $$\widetilde{f}_1(x)= \sigma_{\alpha^{p_1}}(x-x_1)+g(x_1),$$
  which matches $g(x)$ on the interval $[x_{2},\infty)$. Performing the same procedure to construct $\tilde f_2,...., \tilde f_k$ by
  \begin{align*}
          \widetilde{f}_{i}(x)= 
          \sigma_{\alpha^{p_i-p_{i-1}}}(
              \widetilde{f}_{i-1}(x)-
              \widetilde{f}_{i-1}(x_i)
              )+g(x_{i}), i=2,...,k,
      \end{align*}
  we have that $\widetilde{f}_{k}(x)$ matches $g(x)$ on the whole space $\mathbb{R}$. 
  {\changes
  Define $L$ as the integer
  \begin{align}
        L = \sum_{i=1}^k |p_i - p_{i-1}|, \quad p_0 = 0,
  \end{align}
  then according to Proposition \ref{prop:sigma_alpha_p}, $f_L = \widetilde{f}_{k}$ is a leaky-ReLU network with depth $L$, which completes the proof.
  }
  \end{proof}
  
  \subsection{Approximation power of networks with widths of one}
  
  Here, we prove the main result for one-dimensional networks, $d=1$. We approximate a monotone function by PL functions, which are then approximated by leaky-ReLU networks (see Figure~\ref{fig:1ddemo}).
  
  \begin{lemma}\label{th:alpha_PL_monotonic}
  Let $u(x) \in C([-1,1])$ be a monotonic function; then, for any $\varepsilon >0$, there exists an $\alpha$-power PL function $g(x)$ such that
  \begin{align}
          |u(x)-g(x)| \le \varepsilon,
          \quad 
          \forall x\in [-1,1].
      \end{align}
  \end{lemma}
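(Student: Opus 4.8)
\textbf{Proof proposal for Lemma~\ref{th:alpha_PL_monotonic}.}

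The plan is to approximate $u$ first by a continuous piecewise linear (PL) function with ordinary (arbitrary) slopes, and then to perturb that PL function into an $\alpha$-power PL function by rounding each slope to a nearby integer power of $\alpha$. Without loss of generality I assume $u$ is monotonically increasing (the decreasing case is symmetric, and the constant case is trivial since a constant is an $\alpha$-power PL function with $c=0$).

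\emph{Step 1 (PL approximation).} Since $u$ is continuous on the compact interval $[-1,1]$, it is uniformly continuous. Choose a uniform partition $-1 = t_0 < t_1 < \dots < t_M = 1$ fine enough that the oscillation of $u$ on each subinterval is less than $\varepsilon/2$. Let $h(x)$ be the continuous PL interpolant of $u$ at the nodes $t_i$; then $\|u-h\|_\infty \le \varepsilon/2$ on $[-1,1]$, and because $u$ is increasing, each slope $s_i := (u(t_{i+1})-u(t_i))/(t_{i+1}-t_i)$ of $h$ is $\ge 0$. If some slopes vanish, I perturb $h$ slightly (tilting the flat pieces by a tiny positive amount, which can be absorbed into the $\varepsilon/2$ budget) so that every slope is strictly positive; this is needed because $0$ is not of the form $c\alpha^k$.

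\emph{Step 2 (rounding slopes to $\alpha$-powers).} Now I replace each positive slope $s_i$ by $\hat s_i := c\alpha^{k_i}$ where $c>0$ is a fixed constant (e.g.\ $c = \max_i s_i$ or simply $c=1$) and $k_i \in \mathbb{Z}$ is chosen so that $\hat s_i$ is within a factor $\alpha^{-1}$ of $s_i$; since the grid $\{c\alpha^k : k\in\mathbb{Z}\}$ has $\alpha^{-1}$-dense ratios, such $k_i$ exists, giving $|\hat s_i - s_i| \le (\alpha^{-1}-1)\, s_i$. Define $g$ to be the continuous PL function with the same breakpoints $t_i$, the same value $g(-1)=h(-1)$, and slopes $\hat s_i$. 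Then $g$ is by construction an $\alpha$-power PL function with constant $c$. The pointwise error between $g$ and $h$ is controlled by summing the slope errors: for $x\in[t_m,t_{m+1}]$, $g(x)-h(x) = \sum_{i<m}(\hat s_i - s_i)(t_{i+1}-t_i) + (\hat s_m - s_m)(x-t_m)$, so $\|g-h\|_\infty \le \sum_i |\hat s_i-s_i|(t_{i+1}-t_i) \le (\alpha^{-1}-1)\sum_i s_i (t_{i+1}-t_i) = (\alpha^{-1}-1)\,(u(1)-u(-1))$. This bound is not yet small; the fix is to first shrink the partition-independent part by choosing $c$ adaptively, or better, to refine: replace the crude $\alpha^{-1}$-dense estimate by noting we may rescale. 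Concretely, I would instead iterate Step 1/Step 2 with a finer partition so that the total variation contribution per piece is small, OR — cleaner — observe that we are free to compose with a preliminary affine rescaling of the $x$-axis, which does not change the class of $\alpha$-power PL functions but lets us make each individual slope $s_i$ as close to a single target power $\alpha^{k_0}$ as we like. The honest route: I pick the partition in Step 1 fine enough, then in Step 2 round slopes multiplicatively, obtaining $\|g-h\|_\infty \le (\alpha^{-1}-1)\,\mathrm{Var}(u)$, which is a fixed multiple of the total variation — too big. So the real argument must interleave the two steps.

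\emph{The main obstacle, and how I expect to resolve it.} The crux is that multiplicative rounding of slopes to powers of $\alpha$ incurs an error proportional to the total variation of $u$, independent of mesh size, so naive "PL then round" does not converge. The resolution I anticipate is a two-scale construction: approximate $u$ by $h$ on a \emph{coarse} grid, then on each coarse piece replace the single linear segment of slope $s_i$ by a \emph{fine} zig-zag (a PL function all of whose slopes already lie in $\{c\alpha^k\}$) whose average slope equals $s_i$ and whose amplitude is $O(\text{coarse mesh})$; since only two slope values are needed locally (one just above and one just below $s_i$ in the $\alpha$-power grid), and the zig-zag stays within the coarse cell, the extra error is controlled by the coarse mesh width, which we send to $0$. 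Counting breakpoints shows $g$ is still a finite PL function, and its slopes all lie in $\{c\alpha^k : k\in\mathbb{Z}\}$, so $g$ is an $\alpha$-power PL function with $|u-g|\le\varepsilon$ on $[-1,1]$, as required. Combining with Lemma~\ref{lemma:alpha_NN} then yields Lemma~\ref{th:main_1d}.
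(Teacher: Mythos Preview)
Your final resolution is exactly the paper's idea: on each linear piece of $h$ with slope $s_i$, replace it by a piecewise linear function using the two $\alpha$-power slopes bracketing $s_i$, keeping the endpoint values fixed. So the approach is essentially the same once you arrive there; the false starts (naive slope rounding, rescaling the $x$-axis) are unnecessary.

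Two points where the paper's execution is cleaner than your sketch. First, a single fold suffices: with slopes $C\alpha^{j_1}\le s_i < C\alpha^{j_2}$ you can connect $(x_i,h(x_i))$ to $(x_{i+1},h(x_{i+1}))$ using exactly two linear pieces, so no ``fine zig-zag'' or second scale is needed. Second, the correct error control is in the $y$-direction, not ``$O(\text{coarse mesh})$'': because $h$, $g^+$, $g^-$ are all increasing and agree at both endpoints of the piece, one has $|g(x)-h(x)|\le h(x_{i+1})-h(x_i)$ automatically. Hence the paper chooses the partition so that each $y$-increment is at most $\Delta h=\varepsilon/4$ (rather than controlling the $x$-mesh), which makes the total bound $|u-g|\le \varepsilon/2 + 2\Delta h = \varepsilon$ immediate. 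Your ``amplitude is $O(\text{coarse mesh})$'' would need the slopes to be bounded, which they are not a priori; bounding by the $y$-increment avoids this issue entirely.
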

  \begin{proof}
  Without loss of generality, we assume $u(x)$ is monotonically increasing. Let $\tilde{u}(x):=u(x)+\frac{\varepsilon}{2}x$, which is strictly monotonically increasing. Let {\changes $\Delta h =\frac{\varepsilon}{2}$}; according to the skills of numerical analysis, there is a strictly increasing and continuous PL function $h (x)$ with breakpoints $\{x_1,\cdots,x_n\}$, {\changes $x_1=1, x_n=-1$} that satisfies (see Figure \ref{fig:1ddemo})
  
  \begin{align}
          h(x_i)=\tilde{u}(x_i),
          &\quad i = 1,...,n,\\
          \lvert h(x_{i+1})-h(x_{i})\rvert \le \Delta h,
          &\quad
          i = 1,...,n-1.
      \end{align}
  As a consequence, $h(x)$ approximates $\tilde u(x)$ well with uniform errors that are less than $\Delta h$ on the whole interval $[-1,1]$,
  \begin{align}\label{eq:monotonic_inequality}
          \lvert h(x)-\tilde{u}(x)\rvert \le h(x_{i})-\tilde{u}(x_{i+1})\le \Delta h,  
          \quad
          x\in[x_{i+1},x_{i}], i=1,...,n-1.
      \end{align}
  
  Next, we construct an $\alpha$-power PL function $g(x)$ to approximate $h(x)$. The idea is to fold each piece of $h(x)$ into two pieces with $\alpha$-power slopes (see Figure \ref{fig:1ddemo}(a)). 
  {\changes 
  Consider the $i$-th piece of $h(x)$ as
  \begin{align*}
          h(x) = s_i x + b_i, \ x \in [x_{i+1},x_{i}].
      \end{align*}
  For any fixed positive constant $C$, there exist two integers, $p_{i1},p_{i2}$, such that
  $C \alpha^{p_{i1}} \le s_i < C \alpha^{p_{i2}}$, where $p_{i2}$ can be set as $p_{i1} -1$ if $\alpha \in (0,1)$ and $p_{i1} +1$ if $\alpha >1$.
  Then, the following PL functions, $g^\pm(x)$, can approximate $h(x)$ well:
  \begin{align*}
          g^+(x) &= \left\{
              \begin{aligned} 
              C \alpha^{p_{i2}} (x - x_{i+1}) + h(x_{i+1}), &&& x \in [x_{i+1},\xi_i],\\
              C \alpha^{p_{i1}} (x - x_{i}) + h(x_{i}), &&& x\in [\xi_i,x_{i}],
              \end{aligned}
              \right.\\
          g^-(x) &= \left\{
              \begin{aligned} 
              C \alpha^{p_{i1}} (x - x_{i+1}) + h(x_{i+1}), &&& x \in [x_{i+1},\xi'_i],\\
              C \alpha^{p_{i2}} (x - x_{i}) + h(x_{i}), &&& x\in [\xi'_i,x_{i}],
              \end{aligned}
              \right.
      \end{align*}
  where $\xi$ and $\xi'$ are assigned such that $g^+$ and $g^-$ are continuous functions. Repeating the construction on each interval, choosing either $g^+$ or $g^-$ and concatenating them together, we can obtain an $\alpha$-power function $g$ with at most $2n-1$ breakpoints. (In fact, we can choose a proper assignation of $p_{i1}$ and $p_{i2}$ such that $g$ has at most $n$ pieces.)
  
  Finally, noticing that $g(x_i)=h(x_i)=\tilde u(x_i)$, $\tilde u$ and $g$ are monotonically increasing, and using the same argument in (\ref{eq:monotonic_inequality}), we have
  \begin{align*}
      |u(x)-g(x)| \le 
      |u(x) - \tilde{u}(x)| 
      + |\tilde{u}(x)-g(x)| 
      \le 
      \frac{\varepsilon}{2} + \Delta h
      \le \varepsilon, \quad \forall x \in [-1,1].
      \end{align*}
  }
  \end{proof}
  
  \begin{figure}[htp]
      \centering
      \includegraphics[width=12cm]{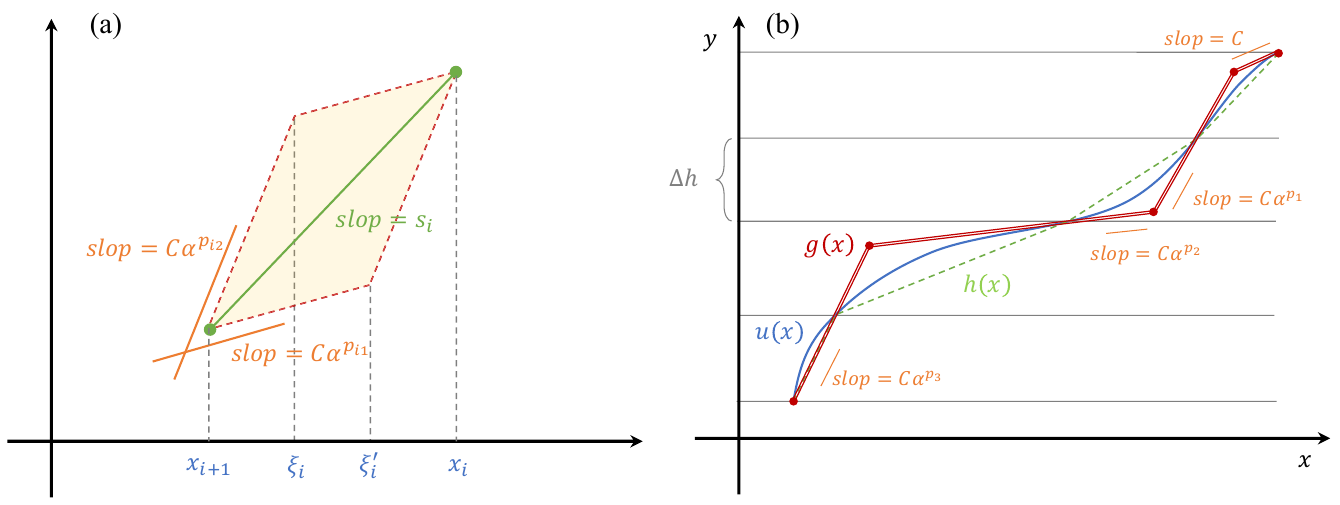}
  \caption{Approximating monotonic function $u(x)$ by $\alpha$-power PL function $g(x)$; (a) approximation of the single linear piece case; (b) approximation of $u$ by PL function $h$, which is approximated by $\alpha$-power PL function $g$.}
      \label{fig:1ddemo}
  \end{figure}
  
  With the lemmas and propositions above, we can draw our universal approximation theorem for a one-dimensional monotonic function as follows.
  
  \begin{proof}[Proof of Lemma \ref{th:main_1d}]
  For any interval $I=[a,b]$, we can scale it to $[-1,1]$ by a linear function. Then, according to Proposition \ref{th:alpha_PL_monotonic}, which shows that the $\alpha$-power PL function $g(x)$ can approximate $u(x)$, and Lemma \ref{lemma:alpha_NN}, which verifies the equivalence of the $\mathcal{N}_{1}$ and $\alpha$-power PL functions, we can prove that there exists $L\in \mathbb{Z}^+$ such that $|u(x)-f_L(x)|\le \varepsilon$ for all $x \in I$.
  \end{proof}

  {\changes
  
  To estimate the required depth $L$ to approximate a smooth and strictly monotonic function $u \in C^2(I,\mathbb{R})$, we only need to count the composition times in the constructed leaky-ReLU network $f_L$. To approximate $u$ with an error less than $\varepsilon$, a PL function $h$ with $k:= \lfloor V(u)/\varepsilon \rfloor  $ breakpoints is enough. 
{\changess
 Then we approximate $h$ by an $\alpha$-power PL function $g$, \emph{i.e.}, by a leaky-ReLU network $f_L$, constructed using the method described in the proof of Lemma~\ref{th:alpha_PL_monotonic}.
}
According to specific chosen powers $p_i$ of $g$, the depth $L$ satisfies
  \begin{align}
 L = \sum_{i=1}^k |p_i - p_{i-1}|
 \le
        \sum_{i=1}^{k} \Big( \Big|\frac{\ln s_{i-1}}{\ln \alpha} - \frac{\ln s_i}{\ln \alpha} \Big|+1 \Big).
  \end{align}
 Since the slope $s_i$ on the interval $[x_{i+1}, x_{i}]$ is set by $s_i = \frac{\varepsilon}{x_{i} - x_{i+1}}$, we have
  \begin{align}
        \sum_{i=1}^{k} |\ln s_{i-1} - \ln s_i| 
        =& 
        \sum_{i=1}^{k} |\ln \frac{\varepsilon}{x_{i-1}-x_{i}} - \ln \frac{\varepsilon}{x_{i}-x_{i+1}}| \nonumber\\
        =&
        \sum_{i=1}^{k} |\ln u'(\eta_{i-1}) - \ln u'(\eta_{i}) | 
        \le %V(\ln g') = 
        V(\ln u'),
  \end{align}
 where the second equality employed the differential mean value theorem with mean values $\eta_i \in (x_{i+1}, x_{i})$ and the last inequality is from the definition of the total variation of the function $\ln u'$. 
 Finally, we have the estimation that
  \begin{align}
 L \le \tfrac{V(u)}{\varepsilon} + \tfrac{V(\ln u')}{|\ln \alpha|}.
  \end{align}
 It should be emphasized that when $\alpha$ tends to zero, the bound tends to ${V(u)}/{\varepsilon}$; {\changess Although the leaky-ReLU converges to ReLU locally, \emph{i.e.,} $\lim_{\alpha\to 0} \sigma_{\alpha}(x)=\text{ReLU}(x)$ for any fixed $x\in \mathbb{R}$, the limiting bound for $L$ does not hold for ReLU networks with width one, as their approximation power is poor (see Proposition \ref{th:relu_one}). In other words, Lemma \ref{th:main_1d} does not hold if ReLU activation is used instead of leaky-ReLU.}
  }
  
  \subsection{Approximate dynamical systems by neural ODEs} Here, we show that the flow map of the original ODE in (\ref{eq:ODE_general}) can be approximated by that of neural ODEs up to any accuracy. This is based on the error estimation between two ODE systems.
  
  \begin{lemma}\label{th:ODE_error_estimation}
  Consider two ODE systems
  $$\dot{x}(t) = f_i(x(t),t), t\in(0,\tau), i=1,2,$$
  where $f_i(x,t)$ are continuous w.r.t $x\in\mathbb{R}^d$ and piecewise continuous w.r.t $t \in [0,\tau]$. In addition, we assume $f_1(x,t)$ is bounded by $M$ and Lipschitz continuous with constant {\changes $\tilde L$}; \emph{i.e.,} $\|f_1(x,t)\| \le M$, $\|f_1(x,t)-f_1(x',t)\| \le \tilde L \|x-x'\|$. Then, for any compact domain $\Omega$ and $\varepsilon>0$, there is $\delta \in (0,1]$ such that the flow maps $\phi_1^\tau(x) ,\phi_2^\tau(x)$ satisfy
  \begin{align}
          \|\phi_1^\tau(x)- \phi_2^\tau(x)\| \le \varepsilon,
      \end{align}
  for all $x\in\Omega$ provided
  \begin{align}\label{eq:f_approx_condition}
          \|f_1(x,t)-f_2(x,t)\|< \delta, \quad\forall (x,t) \in \Omega_{\tau}\times[0,\tau],
      \end{align}
  where $\Omega_\tau$ is a compact domain defined as
  \begin{align}\label{eq:define_Omega_tau}
          \Omega_\tau = \{x + (M+1)\tau e^{\tilde L\tau} x': x \in \Omega, \|x'\| \le 1\}.
      \end{align}
  \end{lemma}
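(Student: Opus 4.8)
The plan is to fix $x\in\Omega$ and compare the two trajectories $x_1(t):=\phi_1^t(x)$ and $x_2(t):=\phi_2^t(x)$ issued from $x$ by a Gronwall argument. Here $x_1$ exists and is unique on $[0,\tau]$ because $f_1$ is Lipschitz; $x_2$ exists on $[0,\tau]$ because, under~(\ref{eq:f_approx_condition}), $f_2$ is bounded by $M+\delta\le M+1$ on $\Omega_\tau$, so a solution cannot escape that compact set before time $\tau$ (if $\phi_2$ is not unique, the bound below will hold for any such solution). The first step I would take is to rewrite both ODEs in integral form and insert $\pm f_1(x_2(s),s)$:
\begin{align*}
    x_1(t)-x_2(t)=\int_0^t\big(f_1(x_1(s),s)-f_1(x_2(s),s)\big)\,ds+\int_0^t\big(f_1(x_2(s),s)-f_2(x_2(s),s)\big)\,ds.
\end{align*}
As long as $x_2(s)\in\Omega_\tau$ for $s\le t$, the Lipschitz bound on $f_1$ and~(\ref{eq:f_approx_condition}) give $g(t):=\|x_1(t)-x_2(t)\|\le\int_0^t L\,g(s)\,ds+\delta\tau$, and Gronwall's inequality yields $g(t)\le\delta\tau e^{Lt}\le\delta\tau e^{L\tau}$ on that range.

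The part that needs care is the circular dependence: the estimate above is valid only while $x_2$ stays in $\Omega_\tau$, so I would close it by a bootstrap (continuity) argument exploiting the exact form of $\Omega_\tau$ in~(\ref{eq:define_Omega_tau}). Note $x_1$ always satisfies $\|x_1(t)-x\|\le Mt\le M\tau<(M+1)\tau e^{L\tau}$, hence $x_1(t)\in\Omega_\tau$. Suppose $\|x_2(t)-x\|$ ever reaches the value $(M+1)\tau e^{L\tau}$ and let $t_0$ be the first such time; then $x_2(s)\in\Omega_\tau$ for all $s\le t_0$ (since $\Omega_\tau$ is the closed $(M+1)\tau e^{L\tau}$-neighbourhood of $\Omega$), so the Gronwall estimate applies on $[0,t_0]$ and gives $\|x_1(t_0)-x_2(t_0)\|\le\delta\tau e^{L\tau}$. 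Combined with $\|x_1(t_0)-x\|\le M\tau$ this forces $(M+1)\tau e^{L\tau}\le M\tau+\delta\tau e^{L\tau}$, i.e.\ $\delta\ge 1+M(1-e^{-L\tau})\ge1$; since we are free to take $\delta\le\tfrac12$, this is a contradiction, so $x_2(t)\in\Omega_\tau$ for all $t\in[0,\tau]$ and the Gronwall bound holds up to $t=\tau$.

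To conclude, I would simply choose $\delta=\min\{\tfrac12,\,\varepsilon e^{-L\tau}/\tau\}\in(0,1]$, which depends only on $L,\tau,\varepsilon$ and not on $x$, so that $\|\phi_1^\tau(x)-\phi_2^\tau(x)\|=g(\tau)\le\delta\tau e^{L\tau}\le\varepsilon$ uniformly for $x\in\Omega$. The main obstacle is precisely this bootstrap step: one must check that the constant $(M+1)\tau e^{L\tau}$ in the definition of $\Omega_\tau$, together with the normalization $\delta\le1$, is large (resp.\ small) enough to prevent $x_2$ from leaving $\Omega_\tau$ before the estimate can be invoked, and using a strict $\delta<1$ (e.g.\ $\delta\le\tfrac12$) also sidesteps a degenerate boundary case when $L\tau=0$. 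Everything else — the integral reformulation and the invocation of Gronwall — is routine.
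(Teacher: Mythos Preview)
Your proof is correct and follows essentially the same route as the paper: write the difference of the two trajectories in integral form, add and subtract $f_1(x_2(s),s)$, apply the Lipschitz bound and~(\ref{eq:f_approx_condition}), and conclude by Gronwall with $\delta=\min\{1,\varepsilon/(\tau e^{L\tau})\}$ (you use $\tfrac12$ instead of $1$). The one place where you are actually more careful than the paper is the bootstrap showing $x_2(t)\in\Omega_\tau$: the paper simply asserts this a posteriori from the error bound, which is mildly circular, whereas your first-exit-time argument closes that loop cleanly.
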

  \begin{proof}
  First, let us consider the case that inequality (\ref{eq:f_approx_condition}) is satisfied for all $(x,t) \in \mathbb{R}^d\times[0,\tau]$. Consider the error $e(x,t)$,
  $$e(x,t):=\phi_1^t(x) - \phi_2^t(x),$$
  and use the integral form of an ODE
  $$\phi_i^t(x) = \phi_i^0(x) + \int_0^t f_i(\phi_i^s(x),s) d s.$$
  Then, we have
  \begin{align*}
          \|e(x,t)\|
          &=\|\int_0^t 
          f_1(\phi_1^s(x),s) - f_2(\phi_2^s(x),s) ds
          \|\\
          &\le \int_0^t (
              \|f_1(\phi_1^s(x),s)-f_1(\phi_2^s(x),s)\|
              +
              \|f_1(\phi_2^s(x),s)-f_2(\phi_2^s(x),s)\|
              )ds\\
          &\le \int_0^t 
              (\tilde L \|e(x,s)\|+\delta)
              ds
          \le \delta t + \tilde L \int_0^t \|e(x,s)\| d s. 
      \end{align*}
  Employing the Gronwall inequation, we have $\|e(x,t)\| \le \delta t e^{\tilde L t}$. Then, letting $\delta = \min(1,\frac{\varepsilon}{\tau e^{\tilde L \tau}})$, we have $\|e(x,t)\| \le \varepsilon$ for all $x \in \mathbb{R}^d$ and $t \in [0,\tau]$.
  
  Now, we relax the inequality in (\ref{eq:f_approx_condition}) to $(x,t) \in \Omega_{\tau}\times[0,\tau]$. This relaxation only requires that $\phi_i^s(x) \in \Omega_\tau$ for all $x \in \Omega$ and $s \in [0,\tau]$. Since $f_1$ is bounded and Lipschitz, we have
  \begin{align*}
          \|\phi_1^s(x) -x\| \le M \tau e^{\tilde L\tau}.
      \end{align*}
  Accompanying $\|e(x,s)\| \le \delta s e^{\tilde L s} \le \tau e^{\tilde L \tau}$, we have $\phi_i^s(x) \in \Omega_\tau$ given by (\ref{eq:define_Omega_tau}). This completes the proof.
  \end{proof}
  
  \begin{lemma}\label{th:NODE_approx_DS}
  Let $\phi^\tau(x)$ be the flow map of (\ref{eq:ODE_general}) that satisfies Assumption \ref{th:assumption}, and let $\Omega$ be a compact domain. For any $\varepsilon>0$, there is a neural ODE, whose flow map is denoted by $\tilde{\phi}^\tau(x)$ with field function $\tilde{v}(x,t)$,
  \begin{align}\label{eq:tanh_NN}
          \tilde{v}(x,t) = \sum\limits_{i=1}^N a_i(t) \tanh(w_i(t) \cdot x+b_i(t)),
      \end{align}
  in which $N\in\mathbb{Z}^+$ is the number of hidden neurons and $a_i,w_i,b_i$ are piecewise smooth functions of $t$ such that $\|\phi^t(x) - \tilde{\phi}^t(x)\| \le \varepsilon$ for all $x \in \Omega$ and $t\in[0,\tau]$.
  \end{lemma}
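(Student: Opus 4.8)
The plan is to combine the ODE error estimate of Lemma~\ref{th:ODE_error_estimation} with the classical universal approximation theorem for $\tanh$ networks \cite{Cybenkot1989Approximation, Hornik1989Multilayer, Leshno1993Multilayer}, freezing the time variable on a fine partition of $[0,\tau]$ so that the resulting field has piecewise-constant (hence piecewise-smooth) coefficients $a_i(t),w_i(t),b_i(t)$.

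\emph{Reduction to a bounded field.} First I would reduce to the case where the field $v$ in~(\ref{eq:ODE_general}) is globally bounded, so that Lemma~\ref{th:ODE_error_estimation} applies with $f_1=v$. By Assumption~\ref{th:assumption}, $v$ is globally Lipschitz in $x$ and $\sup_{t\in[0,\tau]}\|v(0,t)\|<\infty$ (finitely many smooth pieces, each bounded), so $v$ has at most linear growth in $x$; a Grönwall estimate then confines the trajectory bundle $\{\phi^s(x):x\in\Omega,\ s\in[0,\tau]\}$ to a ball $B$. Multiplying $v$ by a fixed, time-independent Lipschitz cutoff that equals $1$ on a slightly larger ball $B'\supset B$ and vanishes outside an even larger ball produces a field $\hat v$ that is globally bounded by some $M$, globally Lipschitz in $x$ with some constant $L$, still piecewise smooth in $t$, and agrees with $v$ on $B'$. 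Since $\phi^t$ never leaves $B\subset\operatorname{int}B'$, a uniqueness-plus-continuation argument shows the flow $\hat\phi^t$ of $\hat v$ coincides with $\phi^t$ on $\Omega$ for all $t\in[0,\tau]$. Hence it suffices to approximate $\hat\phi^\tau$, and I rename $\hat v\to v$, $\hat\phi\to\phi$.

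\emph{Constructing $\tilde v$.} Apply Lemma~\ref{th:ODE_error_estimation} to obtain, for the prescribed $\varepsilon$, a tolerance $\delta\in(0,1]$ and the fattened compact set $\Omega_\tau$ of~(\ref{eq:define_Omega_tau}); it is then enough to build a $\tanh$-network field $\tilde v$ with $\|v(x,t)-\tilde v(x,t)\|<\delta$ on $\Omega_\tau\times[0,\tau]$. On each of the finitely many intervals on which $v$ is smooth in $t$, the map $(x,t)\mapsto v(x,t)$ is uniformly continuous on a compact set, so I can partition $[0,\tau]$ into finitely many subintervals $[s_{l-1},s_l]$ (refining the breakpoints of $v$) with $\|v(x,t)-v(x,s_l)\|<\delta/2$ for all $x\in\Omega_\tau$ and $t\in[s_{l-1},s_l]$. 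For each $l$, the function $v(\cdot,s_l)\in C(\Omega_\tau,\mathbb{R}^d)$ is approximated within $\delta/2$ on $\Omega_\tau$ by a one-hidden-layer $\tanh$ network $g_l(x)=\sum_{i=1}^{N_l}a_i^l\tanh(w_i^l\cdot x+b_i^l)$ (componentwise universal approximation). Taking $N=\max_l N_l$ and padding each $g_l$ with zero-weight neurons to this common width, I set $\tilde v(x,t)=g_l(x)$ for $t\in[s_{l-1},s_l)$; then $a_i(t),w_i(t),b_i(t)$ are piecewise constant, hence piecewise smooth, and $\|v(x,t)-\tilde v(x,t)\|<\delta$ on $\Omega_\tau\times[0,\tau]$ away from the finitely many breakpoints, which have zero measure and do not affect the flow. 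Lemma~\ref{th:ODE_error_estimation}, whose proof bounds the error uniformly for all intermediate times as $\|e(x,t)\|\le\delta t e^{Lt}$, then yields $\|\phi^t(x)-\tilde\phi^t(x)\|\le\varepsilon$ for all $x\in\Omega$ and $t\in[0,\tau]$, which is the claim.

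\emph{Main obstacle.} The Grönwall/cutoff reduction and the per-slice invocation of classical UAP are routine. The one point requiring care is producing a single network architecture whose parameters depend on $t$ only in a piecewise-smooth manner while staying uniformly close to $v$ over all of $\Omega_\tau\times[0,\tau]$: this is exactly where the piecewise smoothness in Assumption~\ref{th:assumption} is used (to get uniform continuity in $t$ on each piece), and the ``freeze $t$ on a fine partition, then pad to a common hidden width'' device is the crux of the argument.
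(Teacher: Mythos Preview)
Your proposal is correct and follows the same approach as the paper: invoke Lemma~\ref{th:ODE_error_estimation} to reduce the flow-map approximation to a uniform approximation of the field $v$ on $\Omega_\tau\times[0,\tau]$, then supply $\tilde v$ via the universal approximation property of $\tanh$ networks. The paper's proof is a terse two-line invocation of these two ingredients, whereas you spell out the details it leaves implicit---in particular the boundedness reduction needed to meet the hypotheses of Lemma~\ref{th:ODE_error_estimation} and the ``freeze $t$ on a partition, pad to common width'' device that actually produces piecewise-smooth coefficients $a_i(t),w_i(t),b_i(t)$.
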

  \begin{proof}
  According to the assumption on $v(x,t)$ and the universal approximation property of neural networks, for any $\delta>0$ there exists $N \in \mathbb{Z}^+ $ and tanh neural network (\ref{eq:tanh_NN}) such that $\|\tilde{v}(x,t)-v(x,t)\|<\delta$ for all $(x,t) \in \Omega_\tau \times [0,\tau].$ Then, using Lemma \ref{th:ODE_error_estimation}, we have $\|\phi^t-\tilde{\phi}^t\| \le \varepsilon$.
  \end{proof}
  
  \subsection{Solving neural ODEs with splitting methods}
  
  It is well known that an ODE system can be approximated by many numerical methods. Particularly, we use the splitting approach \cite{holden2010splitting}. Let $v(x,t)$ be the summation of several functions,
  \begin{align}
      v(x,t) = \sum_{j=1}^J v_j(x,t), J \in \mathbb{Z}^+.
  \end{align}
  For a given time step $\Delta t$, we define the iteration as
  \begin{align}\label{eq:iteration_T1}
      x_{k+1} = T_k^{(J)} \circ \dots \circ T_k^{(2)} \circ T_k^{(1)} x_k,
  \end{align}
  where the map $T_k^{(j)}: x \to y$ is
  \begin{align}
      y \equiv T_k^{(j)}(x) = x + \Delta t v_j(x,t_k)
      \quad
      t_k = k \Delta t.
  \end{align}
  
  \begin{lemma}\label{th:split_approach}
  Let all $v_j(x,t), j=1,2,...,J,$ and $v(x,t), (x,t)\in \mathbb{R}^d \times [0,\tau],$ be piecewise smooth (w.r.t. $t$) and $\tilde L$-Lipschitz ($\tilde L>0$); \emph{i.e.,}
  $$\|v_j(x_1,t)-v_j(x_2,t)\| \le \tilde L \|x_1-x_2\|, \forall x_1,x_2 \in \mathbb{R}^d,$$
  $$\|v(x_1,t)-v(x_2,t)\| \le \tilde L \|x_1-x_2\|, \forall x_1,x_2 \in \mathbb{R}^d.$$
  In addition, let the norm of $v_j,v$ and their gradients be bounded in each piece. Then, for any $\tau>0$, $\varepsilon>0$ and $x_0$ in a compact set $\Omega$, there exist a positive integer $n$ and $\Delta t = \tau/n$ such that
  $\|x(k\Delta t) - x_k\| \le \varepsilon$ for all $k\le n$. Particularly, $\|x(\tau) - x_n\| \le \varepsilon$.
  \end{lemma}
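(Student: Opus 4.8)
The plan is to treat the composed step $T_k = T_k^{(J)}\circ\cdots\circ T_k^{(1)}$ as a single one-step integrator and run the classical \emph{consistency $+$ stability $\Rightarrow$ convergence} argument, closing the estimate with a discrete Gr\"onwall inequality. First I would establish \emph{stability}: since $v_j(\cdot,t_k)$ is $L$-Lipschitz, each map $x\mapsto x+\Delta t\,v_j(x,t_k)$ is Lipschitz with constant $\le 1+L\Delta t\le e^{L\Delta t}$, so $T_k$ is Lipschitz with constant $\le (1+L\Delta t)^J\le e^{JL\Delta t}$. Because each $v_j$ is bounded (say by $M$ on the relevant piece) we also get $\|T_k(x)-x\|\le JM\Delta t$, hence every iterate $x_k$ (and every intermediate split state, and the exact trajectory) stays in the fixed compact set $\widetilde\Omega=\{x:\operatorname{dist}(x,\Omega)\le JM\tau\}$; from now on all the bounds below (on $\|v_j\|,\|v\|$, the Lipschitz constants and the gradients) may be taken uniform over $\widetilde\Omega$ and over $x_0\in\Omega$.

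Next comes the \emph{consistency} estimate, comparing one step $T_k$ started from a point $x$ with the exact solution operator $\Psi_k$ of the ODE over $[t_k,t_{k+1}]$. By induction on the split index, $x^{(j)} = x + \Delta t\sum_{i\le j}v_i(x,t_k) + O(\Delta t^2)$, using $\|x^{(i-1)}-x\|=O(\Delta t)$ together with the Lipschitz property of $v_i$ (the constant is uniform since $J$ is fixed); taking $j=J$ gives $T_k(x) = x + \Delta t\,v(x,t_k) + O(\Delta t^2)$. On the other hand $\Psi_k(x) = x + \int_{t_k}^{t_{k+1}} v(x(s),s)\,ds$ where $x(\cdot)$ solves the ODE from $x$ at time $t_k$; if $[t_k,t_{k+1}]$ lies inside a single smoothness piece of $v$, then $v$ is (jointly) Lipschitz there, so $\|v(x(s),s)-v(x,t_k)\|\le L\|x(s)-x\| + L_t\,\Delta t = O(\Delta t)$ and hence $\Psi_k(x) = x + \Delta t\,v(x,t_k) + O(\Delta t^2)$. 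Subtracting, $\|T_k(x)-\Psi_k(x)\|\le C_1\Delta t^2$ on such ``good'' steps. For the finitely many ``bad'' steps whose interval straddles one of the $P$ breakpoints of $v(\cdot,t)$ in $(0,\tau)$, I would keep only the crude bound $\|T_k(x)-\Psi_k(x)\|\le \|T_k(x)-x\|+\|\Psi_k(x)-x\|\le 2JM\Delta t$.

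Finally I would propagate the error. Writing $e_k=\|x(t_k)-x_k\|$ and using $x(t_{k+1})=\Psi_k(x(t_k))$, $x_{k+1}=T_k(x_k)$, the triangle inequality and stability give $e_{k+1}\le e^{JL\Delta t}e_k + \eta_k$ where $\eta_k$ is the local error at step $k$. With $e_0=0$ this unwinds to $e_k\le e^{JL\tau}\sum_{m<k}\eta_m \le e^{JL\tau}\bigl(C_1\tau\,\Delta t + 2JMP\,\Delta t\bigr)$, since there are at most $n$ good steps, each contributing $C_1\Delta t^2$ with $n\Delta t=\tau$, and at most $P$ bad steps. The right-hand side is independent of $x_0\in\Omega$ and of $k$, and tends to $0$ as $n\to\infty$; choosing $n$ large enough makes it $\le\varepsilon$ for all $k\le n$ simultaneously, which is the claim, and in particular $\|x(\tau)-x_n\|\le\varepsilon$.

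The one point needing care, and what I regard as the main obstacle, is reconciling the jump discontinuities of $v$ in $t$ with the requirement of a \emph{uniform} grid $\Delta t=\tau/n$: in general the breakpoints cannot be aligned with grid points, so a bounded number of steps only enjoy the weaker $O(\Delta t)$ local bound rather than $O(\Delta t^2)$. The resolution above is that the number $P$ of such steps is fixed, each contributes only $O(\Delta t)$, and this is amplified only by the uniformly bounded factor $e^{JL\tau}$, so the total extra contribution is still $O(\Delta t)\to0$. Everything else is the standard bookkeeping for Lie--Trotter splitting, and the hypotheses (global $L$-Lipschitz bounds together with boundedness of $v_j$, $v$ and their gradients on each piece) are precisely what is needed to make the local-error constants uniform over $\widetilde\Omega$.
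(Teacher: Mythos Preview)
Your proposal is correct and follows essentially the same route as the paper: both show the composed splitting step satisfies $T_k(x)=x+\Delta t\,v(x,t_k)+O(\Delta t^2)$, compare with the exact one-step flow, and close via a discrete Gr\"onwall recursion of the form $\|e_{k+1}\|\le(1+O(\Delta t))\|e_k\|+O(\Delta t^2)$. The paper reduces to $J=2$ with smooth $v_j$ and dismisses the piecewise-in-$t$ issue with a one-line remark, whereas you handle general $J$ and the finitely many breakpoint-straddling steps explicitly; this is more careful but not a different argument.
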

  \begin{proof}
  Without loss of generality, we only consider $J=2$ and assume $v_j(x,t)$ are smooth such that $v_j(x,t) \in C^3$. (The general $J$ and the piecewise smooth case can be proven accordingly.) Thus, we have
  \begin{align*}
      x_{k+1} &= T_k^{(2)} (x_k + \Delta t v_1(x_k,t_k))\\
              & = x_k + \Delta t v_1(x_k,t_k) + \Delta t v_2(x_k + \Delta t v_1(x_k, t_k),t_k).
  \end{align*}
  Since $v_2(x,t)$ is smooth, the Taylor expansion w.r.t. $x$ can be employed to obtain
  \begin{align*}
      x_{k+1} 
      &= x_k + \Delta t (v_1(x_k,t_k) + v_2(x_k,t_k)) 
      + \Delta t^2 
      \nabla_x v_2(\eta_k,t_k) \cdot v_1(x_k,t_k) \\
          &=x_k+v(x_k,t_k)\Delta t + 
      \nabla_x v_2(\eta_k,t_k) \cdot v_1(x_k,t_k)
      \Delta t^2,
  \end{align*}
  where $\eta_k = x_k + \xi_k \Delta t v_1(x_k,t_k), \xi_k \in (0,1)$.
  However, the Taylor expansion for $x(t)$ w.r.t. $t$ gives
  \begin{align*}
      x(t_{k+1})
      &=
      x(t_k) + \dot x(t_k)\Delta t+ \frac{1}{2} \ddot x(\lambda_k) \Delta t^2 \\
      &=
      x(t_k) + v(x(t_k), t_k)\Delta t+ 
      \frac{1}{2}
      [
          \nabla_x v \cdot v
      +
          \partial_t v
      ]\lvert_{\lambda_k}
      \Delta t^2,
  \end{align*}
  where $\lambda_k = t_k + \zeta \Delta t, \zeta \in (0,1)$.
  According to the smoothness and boundedness of $v_1$ and $v_2$, there is a positive constant $c$ such that
  \begin{align*}
      \max_j\{1, \|v_j\|,\|\nabla_{(x,t)} v_j\|,
      \|v\|,\|\nabla_{(x,t)} v\|\} \le c.
  \end{align*}
  Defining the error as $e_k:=x_k-x(t_k)$, we have the following estimation:
  \begin{align*}
      \|e_{k+1}\|
      &\le 
      \|e_k\| + 
      \Delta t \|v(x(t_k),t_k)-v(x_k,t_k)\|+ 2c^2 \Delta t^2\\
      &\le (1 + \tilde L\Delta t)\|e_k\| + 2c^2 \Delta t^2.
  \end{align*}
  Employing the inequality $(1+\tilde L\Delta t)^k \le e^{\tilde L k \Delta t } \le e^{\tilde L\tau}$, we have
  \begin{align*}
      \|e_{k}\|
      \le
      (1+\tilde L\Delta t)^k\|e_0\|
      +
      \frac{2c^2 \Delta t^2}{\tilde L\Delta t}[(1+\tilde L\Delta t)^k-1]
      \le 
      2c^2 \Delta t (e^{\tilde L\tau}-1)/ \tilde L .
  \end{align*}
  For any $\varepsilon >0$, $n \ge [\frac{2c^2\tau e^{\tilde L\tau}}{\tilde L \varepsilon} ]$; then, we have $\|x(k\Delta t)-x_k\| \le \varepsilon$, which concludes the proof.
  \end{proof}
  
  Note that the field function $\tilde v(x,t)$ in (\ref{eq:tanh_NN}) or (\ref{eq:specified_v})
  satisfies the condition of the above lemma. As an application, the iteration in (\ref{eq:main_iteration_Tk}) can be used to discretize the ODE in (\ref{eq:ODE_neural}) with the field function (\ref{eq:specified_v}).
  
  \subsection{Approximate each split step by an FNN}
  Our main results rely on the following construction for the iteration $T_k^{(i,j)}$ with the specified $\tilde v$ in (\ref{eq:tanh_NN}) or (\ref{eq:specified_v}), which is a tanh network. Since each $T_k^{(i,j)}$ has the same structure (over a permutation), so we only need to consider the case of $T_k^{(N,d)}$, which is simply denoted as $T:x\to y$,
  \begin{align}\label{eq:iteration_specified_T}
      T: \left\{
      \begin{aligned} 
      & y^{(i)} = x^{(i)} , i = 1,\cdots, d-1,  \\
      & y^{(d)} = x^{(d)} + a \Delta t  \tanh(w \cdot x + \beta),
          \end{aligned}
      \right.
  \end{align}
  where $a=a_{N,d}$ is the $(N,d)$-th element of $A(t_k)$, $w$ is the $N$-th row of $W(t_k)$, and $\beta$ is the $N$-th element of $b(t_k)$.
  
  \begin{theorem}\label{th:NN_for_tahn_flow}
  If $\mathcal{K}$ is a compact set and $\Delta t$ in map $T:x\to y$ is sufficiently small, then for any $\varepsilon>0$, there is a leaky-ReLU network $f_L(x) \in \mathcal{N}(L)$ such that
  \begin{align}\label{eq:in_th_T_f_L}
          \|T(x) - f_L(x)\| \le \varepsilon, \forall x \in \mathcal{K}.
      \end{align}
  \end{theorem}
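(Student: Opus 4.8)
The plan is to reduce the map $T$ in \eqref{eq:iteration_specified_T} to a composition of a bounded number of affine maps and coordinate-wise one-dimensional leaky-ReLU networks, and then invoke Lemma~\ref{th:main_1d} on each one-dimensional piece. First I would observe that $T$ acts as the identity on the first $d-1$ coordinates, and the only nontrivial action is on the $d$-th coordinate via $x^{(d)}\mapsto x^{(d)}+a\,\Delta t\,\tanh(w\cdot x+\beta)$. Since a leaky-ReLU network with width $d$ allows an affine layer on all $d$ coordinates simultaneously, I would first apply an affine map that sends $(x^{(1)},\dots,x^{(d)})$ to something like $(x^{(1)},\dots,x^{(d-1)},\,\nu)$ where $\nu = w\cdot x+\beta$; but I must be careful: this affine map must be invertible so that the network stays in the width-$d$ homeomorphism regime. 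The standard trick is to keep the first $d-1$ coordinates as they are and additionally carry the quantity $\nu = w\cdot x+\beta$; if $w^{(d)}\neq 0$ this is an invertible linear change of the $d$-th coordinate (if $w^{(d)}=0$ one permutes coordinates or handles it as a degenerate but still easy case). So after one invertible affine layer we are in coordinates $(x^{(1)},\dots,x^{(d-1)},\nu)$.

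Next, the key point is that the scalar map $\nu \mapsto \nu + a\,\Delta t\,\tanh(\nu)$ is \emph{strictly monotone increasing} in $\nu$ provided $\Delta t$ is small enough: indeed its derivative is $1 + a\,\Delta t\,\operatorname{sech}^2(\nu) \ge 1 - |a|\,\Delta t > 0$ once $\Delta t < 1/|a|$. Hence by Lemma~\ref{th:main_1d} (applied on a suitable compact interval containing the image of $\mathcal{K}$ under $x\mapsto \nu$) there is a width-one leaky-ReLU network approximating $\nu \mapsto \nu + a\,\Delta t\,\tanh(\nu)$ to arbitrary accuracy; this handles the "new $d$-th coordinate" up to the subtlety that the true target is $x^{(d)} + a\,\Delta t\,\tanh(\nu)$, not $\nu + a\,\Delta t\,\tanh(\nu)$. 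To fix this I would instead approximate the scalar monotone function $g(\nu) := a\,\Delta t\,\tanh(\nu)$ alone (monotone since $a\neq 0$, and one can absorb the sign) by a width-one leaky-ReLU network, run it in parallel on the $\nu$-coordinate while passing $x^{(d)}$ through untouched (the activation $\sigma_\alpha$ is invertible so "passing through" costs only applying $\sigma_\alpha$ and its inverse, using Proposition~\ref{prop:sigma_alpha_p}), and then add the result to $x^{(d)}$ with a final affine layer. Since all these operations — invertible affine maps, coordinatewise leaky-ReLU, and the identity-representation trick of Proposition~\ref{prop:sigma_alpha_p} — are exactly the building blocks of $\mathcal{N}_d$, the composition lies in $\mathcal{N}_d(L)$ for some finite $L$, and a standard triangle-inequality / Lipschitz-propagation argument through the finitely many layers turns the one-dimensional approximation error into the desired bound \eqref{eq:in_th_T_f_L}.

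The main obstacle I expect is the bookkeeping required to keep every intermediate linear map \emph{invertible} (so the construction stays within the width-$d$ regime and the homeomorphism property is preserved) while simultaneously routing three distinct quantities — the passive coordinates $x^{(1)},\dots,x^{(d-1)}$, the "argument" $\nu = w\cdot x + \beta$, and the "accumulator" $x^{(d)}$ — through a width-$d$ pipe whose layers act either by one affine map or by coordinatewise $\sigma_\alpha$. In particular, approximating a scalar monotone function by a width-one leaky-ReLU network and then \emph{embedding} that width-one subnetwork into the $d$-dimensional network requires interleaving its linear steps with $\sigma_\alpha$/$\sigma_\alpha^{-1}$ pairs on the coordinates that are supposed to stay frozen, so that the frozen coordinates emerge unchanged. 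Once this embedding lemma is set up cleanly (it is essentially the observation that $\sigma_\alpha^{-1}\circ\sigma_\alpha = \mathrm{id}$ can be spliced into any layer), the rest is routine: choose $\Delta t < 1/|a|$ to guarantee monotonicity, pick the compact interval for Lemma~\ref{th:main_1d} large enough to contain $\{w\cdot x + \beta : x\in\mathcal{K}\}$, and track the accumulated error through the finitely many Lipschitz layers.
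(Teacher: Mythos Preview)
There is a genuine gap in the routing. After your initial affine layer the $d$ slots hold $(x^{(1)},\dots,x^{(d-1)},\nu)$, so $x^{(d)}$ no longer occupies any slot; it survives only implicitly inside $\nu$. Your proposed repair --- approximate $g(\nu)=a\,\Delta t\,\tanh(\nu)$ on the $\nu$-slot ``while passing $x^{(d)}$ through untouched'' and then add --- is therefore impossible to execute: there is no slot carrying $x^{(d)}$, and the $\sigma_\alpha^{-1}\circ\sigma_\alpha$ trick from Proposition~\ref{prop:sigma_alpha_p} only freezes a coordinate that is already present, it does not manufacture an extra one. You correctly diagnose the $(d{+}1)$-quantities-in-$d$-slots obstruction in your last paragraph but do not actually resolve it.

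The paper resolves it by storing $\nu$ in a \emph{different} slot from the one being updated: assuming some $w_j\neq 0$ with $j\neq d$ (say $j=1$), it puts $\nu$ in slot $1$ and keeps $x^{(d)}$ in slot $d$; then it approximates $\tanh$ on slot $1$, linearly adds $a\Delta t$ times slot $1$ to slot $d$, approximates $\mathrm{arctanh}$ on slot $1$ to recover $\nu$, and finally recovers $x^{(1)}$ linearly from $\nu$ and the other slots. The subtlety --- and the reason the smallness of $\Delta t$ is needed --- is that slot $d$ has meanwhile changed from $x^{(d)}$ to $y^{(d)}$, so one further monotone correction $\nu\mapsto\nu+w_d a\,\Delta t\,\tanh(\nu)$ is inserted before that last linear step. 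Your own set-up is in fact salvageable, and more cheaply, when $w_d\neq 0$: since $x^{(d)}+a\,\Delta t\,\tanh(\nu)=\nu/w_d + a\,\Delta t\,\tanh(\nu) - (\beta+\sum_{i<d}w_i x^{(i)})/w_d$, you can approximate the single monotone scalar map $\nu\mapsto \nu/w_d+a\,\Delta t\,\tanh(\nu)$ on slot $d$ (monotone once $|w_d a|\Delta t<1$) and follow with one linear layer using slots $1,\dots,d-1$; this is essentially your first attempt with the missing factor $1/w_d$ restored. But when $w_d=0$ this collapses, and ``permute coordinates'' does not help directly because $T$ updates coordinate $d$ specifically; in that case you must fall back on the paper's scheme of borrowing a slot $j\neq d$ with $w_j\neq 0$.
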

  
  \begin{proof}
  Here, $C := \max\{\|A\|_\infty,\|W\|_\infty\} $, $w = (w_1,...,w_d)$, $d \geq 2$, and $\Delta t < \frac{1}{{C}^2}$; then, $T$ is rewritten as
  \begin{align}\label{eq:iteration_2dim_T}
      T: \left\{
      \begin{aligned} 
      & y^{(i)} = x^{(i)} ,  i = 1,\cdots, d-1,\\
      & y^{(d)} = x^{(d)} + a \Delta t  \tanh(w_{1} x^{(1)}+ \cdots +w_{d} x^{(d)} + \beta).
          \end{aligned}
      \right.
  \end{align}
  We show how to construct a leaky-ReLU network $f_L(x)$ to approximate $T(x)$. Note that the case of $w_1=...=w_{d-1}= 0$ is trivial according to Lemma \ref{th:main_1d}. The reason is that $x^{(d)} + a \Delta t  \tanh(w_{d} x^{(d)} + \beta)$ is monotonic for $x^{(d)}$; then, we can approximate each element of $T$ independently.
  
  Next, we consider the case of $\sum_{i=1}^{d-1} w^2_{i} \neq 0 $. Without loss of generality, we assume $w_1 \neq 0$. We show that map $T$ can be represented by the following composition:
  \begin{align}
      T(x) \equiv F_6 \circ \cdots \circ F_0(x),
  \end{align}
  where each mapping step is as follows:
  \begin{align*} 
      \left(
          \begin{matrix} x^{(1)} \\ x^{(2:d-1)} \\ x^{(d)} 
          \end{matrix}\right)
      &\underrightarrow{F_0}
      \left(
          \begin{matrix} w_{1}x^{(1)}+\cdots+w_{d}x^{(d)}+\beta \\ x^{(2:d-1)}\\ x^{(d)} 
          \end{matrix}\right)
      \equiv
      \left(
          \begin{matrix} \nu \\ x^{(2:d-1)}\\ x^{(d)} 
          \end{matrix}\right)    \\
      &\underrightarrow{F_1}
      \left(
          \begin{matrix} \tanh{(\nu)} \\ x^{(2:d-1)}\\ x^{(d)}
          \end{matrix}\right)
      \underrightarrow{F_2}
      \left(
          \begin{matrix} \tanh{(\nu)} \\ x^{(2:d-1)}\\ x^{(d)}+ a \Delta t \tanh{(\nu)} 
          \end{matrix}\right)
      \underrightarrow{F_3}
      \left(
          \begin{matrix} \nu \\ x^{(2:d-1)}\\ x^{(d)} + a \Delta t \tanh{(\nu)} 
          \end{matrix}\right)\\
      &\underrightarrow{F_4}
      \left(
          \begin{matrix} \nu+w_{d} a \Delta t \tanh{(\nu)} \\ x^{(2:d-1)}\\ x^{(d)}+ a \Delta t \tanh{(\nu)} 
          \end{matrix}\right)
      \underrightarrow{F_5}
      \left(
          \begin{matrix} x^{(1)}+\frac{\beta}{w_{1}} \\ x^{(2:d-1)}\\ x^{(d)}+ a \Delta t \tanh{(\nu)} 
          \end{matrix}\right)\\
      &\underrightarrow{F_6}
      \left(
          \begin{matrix} x^{(1)} \\ x^{(2:d-1)}\\ x^{(d)}+ a \Delta t \tanh{(\nu)} 
          \end{matrix}\right).
  \end{align*}
  Here, $\nu:=w_{1}x^{(1)}+\cdots+w_{d}x^{(d)}+\beta$ and $x^{(2:d-1)}$ represent the elements $x^{(2)},...,x^{(d-1)}$.
  We clarify that each component $F_i,i=0,\cdots,6,$ can be represented or approximated by leaky-ReLU networks $\tilde F_i := f_{i,L_i}(x)$.
  \begin{itemize}
  \item[$F_0,$]$F_2, F_5, F_6.$ These steps can be achieved by simple linear transformations, which are denoted by $\tilde F_0, \tilde F_2, \tilde F_5, \tilde F_6$.
  
  \item[$F_1.$] Since $\tanh(\nu)$ is monotonic, according to Lemma \ref{th:main_1d}, for any $ \varepsilon_1 >0$, we can find a leaky-ReLU network $f_{1,L_1}$ such that $\|f_{1,L_1}(z)-F_1(z)\| < \varepsilon_1$ for all $z$ in any given compact domain.

  \item[$F_3.$] Since the function $\text{arctanh}(\mu)$ is monotonic and $\mu=\tanh(\nu)$ is in an open interval $(-1,1)$, we can obtain the same conclusion as $F_1$; \emph{i.e.,} for any $ \varepsilon_3 >0$, we can find a leaky-ReLU network $f_{3,L_3}$ such that $\|f_{3,L_3}(z)-F_3(z)\| < \varepsilon_3$ for all $z$ in any given compact domain with $z_1 \in (-1,1)$.
  
  \item[$F_4.$] Since $ w_d a\Delta t < \frac{\Delta t}{{C}^2} < 1$, $\nu+w_{d} a \Delta t  \tanh(\nu)$ is monotonic for $\nu$ and $\nu$ is in a compact domain; for any $ \varepsilon_4 >0$, we can find a leaky-ReLU network $f_{4,L_4}$ such that $\|f_{4,L_4}(z)-F_4(z)\| < \varepsilon_4$ for all $z$ in any given compact domain.
  
  \end{itemize}
  By combining the $\tilde F_i$ networks above and adopting Lemma \ref{th:composition_approximation} below, $f_L := \tilde F_6 \circ \cdots \circ \tilde F_0$ is a leaky-ReLU network, and we can specify the values of $\varepsilon_i$ such that $f_L$ satisfies the condition in (\ref{eq:in_th_T_f_L}). The proof of the theorem is now complete.
  \end{proof}
  
  \begin{lemma}\label{th:composition_approximation}
  Let map $T = F_n \circ ... \circ F_1$ be a composition of $n$ continuous functions $F_i$ defined on an open domain $D_i$, and let $\mathcal{F}$ be a continuous function class that can uniformly approximate each $F_i$ on any compact domain $\mathcal{K}_i \subset D_i$. Then, for any compact domain $\mathcal{K} \subset D_1$ and $\varepsilon >0$, there are $n$ functions $\tilde F_1, ..., \tilde F_n$ in $\mathcal{F}$ such that
  \begin{align}
          \|T(x) - \tilde F_n \circ ... \circ \tilde F_1 (x)\|
          \le \varepsilon,
          \quad
          \forall x \in \mathcal{K}.
      \end{align}
  \end{lemma}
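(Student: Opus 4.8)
The plan is to argue by induction on $n$, at each step peeling off the \emph{outermost} map $F_n$; peeling from the outside rather than the inside is the key point, because it keeps the inner composition acting on the original fixed compact set instead of on a perturbed one. I read the hypothesis ``$T=F_n\circ\cdots\circ F_1$'' as including that the composition is genuinely defined on $\mathcal K$, i.e.\ that $F_i\circ\cdots\circ F_1$ maps $\mathcal K$ into $D_{i+1}$ for every $i$, and I use that the members of $\mathcal F$ (leaky-ReLU networks, in the application) are globally defined, so that every composition $\tilde F_n\circ\cdots\circ\tilde F_1$ makes sense.

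For $n=1$ the statement is just the assumed approximation property of $\mathcal F$: since $\mathcal K\subset D_1$ is compact there is $\tilde F_1\in\mathcal F$ with $\|F_1(x)-\tilde F_1(x)\|\le\varepsilon$ on $\mathcal K$. For the inductive step I write $T=F_n\circ S$ with $S:=F_{n-1}\circ\cdots\circ F_1$, a composition of $n-1$ maps to which the inductive hypothesis applies on the same set $\mathcal K$. First I fix the geometry: $S(\mathcal K)$ is compact and, by the well-definedness assumption, lies in the open set $D_n$, so there is $\rho^\ast>0$ such that the (compact) closed $\rho^\ast$-neighborhood $\mathcal K^\ast$ of $S(\mathcal K)$ is still contained in $D_n$. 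Since $F_n$ is continuous on $D_n$ it is uniformly continuous on $\mathcal K^\ast$, so I can pick $\eta\in(0,\rho^\ast]$ with $\|F_n(u)-F_n(u')\|\le\varepsilon/2$ whenever $u,u'\in\mathcal K^\ast$ and $\|u-u'\|\le\eta$.

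Now I apply the inductive hypothesis to $S$ on $\mathcal K$ with tolerance $\eta$, obtaining $\tilde F_1,\dots,\tilde F_{n-1}\in\mathcal F$ whose composition $\tilde S:=\tilde F_{n-1}\circ\cdots\circ\tilde F_1$ satisfies $\|S(x)-\tilde S(x)\|\le\eta$ for all $x\in\mathcal K$; since $\eta\le\rho^\ast$ this forces $\tilde S(\mathcal K)\subset\mathcal K^\ast\subset D_n$. I then apply the approximation property of $\mathcal F$ on the compact set $\mathcal K^\ast$ to get $\tilde F_n\in\mathcal F$ with $\|F_n(u)-\tilde F_n(u)\|\le\varepsilon/2$ for all $u\in\mathcal K^\ast$. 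Then for every $x\in\mathcal K$, writing $u=\tilde S(x)\in\mathcal K^\ast$, the triangle inequality gives
\[
\|T(x)-\tilde F_n(\tilde S(x))\|\le\|F_n(S(x))-F_n(\tilde S(x))\|+\|F_n(\tilde S(x))-\tilde F_n(\tilde S(x))\|\le\varepsilon/2+\varepsilon/2=\varepsilon,
\]
where the first term uses $\|S(x)-\tilde S(x)\|\le\eta$ together with the uniform continuity of $F_n$ on $\mathcal K^\ast$ (both $S(x)$ and $\tilde S(x)$ lie in $\mathcal K^\ast$). This closes the induction.

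The routine ingredients are the triangle inequality and the fact that a continuous function is uniformly continuous on a compact set. The one point I would be careful about is the domain bookkeeping: one must ensure that each intermediate approximant sends the relevant compact set into the open domain on which the next map is controlled. This is made automatic by (i) fixing the enlarged compact set $\mathcal K^\ast$ together with its safety margin $\rho^\ast$ \emph{before} choosing the inner accuracy $\eta$, and (ii) the outside-in induction, which guarantees that the piece handled by the inductive hypothesis is always evaluated on the original compact set $\mathcal K$ rather than on an approximation of it.
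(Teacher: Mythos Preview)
Your proof is correct and follows essentially the same approach as the paper: peel off the outermost map $F_n$, enlarge the compact image $S(\mathcal K)$ to a compact neighborhood $\mathcal K^\ast\subset D_n$, use uniform continuity of $F_n$ on $\mathcal K^\ast$ to fix the inner tolerance, then combine via the triangle inequality. The paper only writes out the case $n=2$ and remarks that the general case follows by the induction $T=F_n\circ T_{n-1}$, whereas you carry out the induction in full and are more explicit about the domain bookkeeping, but the underlying argument is the same.
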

  \begin{proof}
  It is enough to prove the case of $n=2$. (The case of $n>2$ can be proven by the method of induction, as $T$ can be expressed as the composition of two functions, $T = F_n \circ T_{n-1}$, with $T_{n-1} = F_{n-1} \circ ... \circ F_1$.) According to the definition, we have $F_1(D_1) \subset D_2$. Since $D_2$ is open and $F_1(\mathcal{K})$ is compact, we can choose a compact set $\mathcal{K}_2 \subset D_2$ such that $\mathcal{K}_2 \supset \{F_1(x) + \delta_0  y: x\in \mathcal{K}, \|y\|<1 \} $ for some $\delta_0>0$ that is sufficiently small.
  
  According to the continuity of $F_2$, there is a $\delta \in (0,\delta_0)$ such that
  \begin{align*}
          \|F_2(y) - F_2(y')\| &\le \varepsilon/2, \forall y,y' \in \mathcal{K}_2,
      \end{align*}
  provided $\|y-y'\| \le \delta$.
  The approximation property of $\mathcal{F}$ allows us to choose $\tilde F_1, \tilde F_2 \in \mathcal{F}$ such that
  \begin{align*}
          \|\tilde F_1(x) - F_1(x)\| &\le \delta < \delta_0, \quad \forall x \in \mathcal{K}, \\
          \|\tilde F_2(y) - F_2(y)\| &\le \varepsilon/2, \quad \forall y \in \mathcal{K}_2.
      \end{align*}
  As a consequence, for any $x \in \mathcal{K}$, we have $F_1(x), \tilde F_1(x) \in \mathcal{K}_2$ and
  \begin{align*}
          \|F_2 \circ F_1(x) - \tilde F_2 \circ \tilde F_1(x)\|
          &\le
          \|F_2 \circ F_1(x) - F_2 \circ \tilde F_1(x)\|
          +
          \|F_2 \circ \tilde F_1(x) - \tilde F_2 \circ \tilde F_1(x)\|\\
          &\le
          \varepsilon/2 + \varepsilon/2 = \varepsilon.
      \end{align*}
  \end{proof}
  
  This lemma indicates that we can approximate $T$ by approximating each composition component of $T$.
  
  \subsection{Proof of {\changes Theorem \ref{th:main}} }
  
  Now, we can complete the proof of our main theorem.
  
  \begin{proof}
  We divide our proof into three steps.
  
  First, according to Lemma \ref{th:NODE_approx_DS}, there exists a flow map $\tilde{\phi}^\tau(x)$ of a neural ODE (\ref{eq:ODE_neural}) with field function $\tilde v$ in (\ref{eq:specified_v}) or (\ref{eq:tanh_NN}) and a tanh network with $N$ neurons such that
  \begin{align}
      \|\tilde{\phi}^\tau(x)-\phi^\tau(x)\| \le \frac{\varepsilon}{3}, \forall x \in \Omega.
  \end{align}
  
  Next, we use the splitting method in (\ref{eq:iteration_T1}) with time step $\Delta t = \tau/n, n\in \mathbb{Z}^+$ to approximate $\tilde{\phi}^\tau(x)$. Recalling the interaction in (\ref{eq:main_iteration_Tk}), the numerical solution for $\tilde \phi^\tau(x_0)$ is given by
  \begin{align*}
      x_n &= \Phi(x_0) := T_n \circ \cdots \circ T_1 (x_0) \\
      &=
      T_n^{(N,d)} \circ \dots \circ T_n^{(1,2)} \circ T_n^{(1,1)} 
      \circ
      \cdots \circ
      T_1^{(N,d)} \circ \cdots \circ T_1^{(1,2)} \circ T_1^{(1,1)} 
      (x_0),
  \end{align*}
  where $T^{{i,j}}_k$ is defined by (\ref{eq:map_T}). Note that the tanh network in (\ref{eq:specified_v}) or (\ref{eq:tanh_NN}) satisfies the condition in Lemma \ref{th:split_approach}. Therefore, we can find a sufficiently small $\Delta t$ (sufficiently large $n$) such that
  \begin{align}
      \|\Phi(x)-\tilde{\phi}^\tau(x)\| \le \frac{\varepsilon}{3},
      \forall x \in \Omega.
  \end{align}
  The maps $T^{{i,j}}_k$ in $\Phi$ are smooth and Lipschitz on $\mathbb{R}^d$. Theorem \ref{th:NN_for_tahn_flow} shows that each $T_k^{(i,j)}$ (constrained on any compact domain) can be uniformly approximated by leaky-ReLU networks. As a consequence, adopting Lemma \ref{th:composition_approximation} again, there is a leaky-ReLU network $f_L \in \mathcal{N}_d$ such that
  \begin{equation}
      \|\Phi(x)-f_L(x)\| \le \frac{\varepsilon}{3},
      \forall x \in \Omega.
  \end{equation}
  
  Finally, for any $x \in \Omega$, we have
  \begin{align*}
      \|f_L(x)-{\phi}^\tau(x)\| &
      \le 
      \|f_L(x)-\Phi(x)\|
      +\|\Phi(x)-\tilde{\phi}^\tau(x)\|
      +\|\tilde{\phi}^\tau(x)-\phi^{\tau}(x)\|\\ 
      & \le \frac{\varepsilon}{3}+\frac{\varepsilon}{3}+\frac{\varepsilon}{3} 
      = \varepsilon.
  \end{align*}
  The proof of the theorem is now complete.
  \end{proof}

  \section{Discussion}
  \label{sec:discussion}
  
  Until now, we only considered networks with leaky ReLU activation functions. However, our results can be extended to a broad class of activation functions. Here, we discuss the requirements for the activation functions.
  
  First, we emphasize that our results do not hold for ReLU networks. This argument is based on the following properties, which imply the poor approximation power of ReLU networks with widths of one.
  
  \begin{proposition}\label{th:relu_one}{\changess}
  ReLU neural networks with a width of one are PL functions with at most three pieces.
  \end{proposition}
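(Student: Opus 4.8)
The plan is to use two structural facts about width-one ReLU networks: they are monotone, and they admit only two possible slopes. First I would show by induction on $k$ that each $f_k$ is a monotone, continuous, piecewise linear function of $x$ with finitely many breakpoints. The base case $f_0(x)=w_0 x+b_0$ is affine, hence monotone; and if $f_{k-1}$ is monotone, then $\sigma(f_{k-1})$ is monotone because the ReLU activation $\sigma$ (with $\sigma(t)=\max(0,t)$) is nondecreasing, so $f_k=w_k\sigma(f_{k-1})+b_k$ is monotone too. Moreover, applying $\sigma$ to a monotone PL function only creates new breakpoints at the (at most two) endpoints of the interval $\{f_{k-1}\le 0\}$, so the property of being PL with finitely many pieces propagates up the layers, and the linear maps $h\mapsto w_k h+b_k$ do not affect this.

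Next I would pin down the slope of $f_L$ on each of its pieces via the chain rule. Away from the finitely many breakpoints of the $f_j$'s and the finitely many points where some $f_j$ vanishes, we have $f_L'(x)=w_L\,\sigma'(f_{L-1}(x))\,f_{L-1}'(x)$, and since $\sigma'(t)=1$ for $t>0$ and $\sigma'(t)=0$ for $t<0$, iterating this identity down to $f_0'\equiv w_0$ gives $f_L'(x)=\prod_{j=0}^{L}w_j$ whenever $f_j(x)>0$ for all $0\le j\le L-1$, and $f_L'(x)=0$ otherwise. Writing $p:=\prod_{j=0}^{L}w_j$, this shows every linear piece of $f_L$ has slope $0$ or $p$, with slope $p$ occurring exactly on the pieces contained in $J:=\bigcap_{j=0}^{L-1}\{x:f_j(x)>0\}$.

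Finally, since each $f_j$ is monotone, each set $\{x:f_j(x)>0\}$ is an interval (a ray, all of $\mathbb{R}$, or empty), hence $J$ is an interval, so $\mathbb{R}\setminus J$ has at most two connected components; on each of these $f_L$ is constant, while on $J$ it is affine. Therefore $f_L$ consists of at most three linear pieces (fewer in the degenerate cases $p=0$, $J=\emptyset$, or $J=\mathbb{R}$), and the case $L=0$ is trivial, which proves the claim. I do not expect a genuine obstacle; the only point requiring care is the bookkeeping — verifying that each $f_k$ remains PL with finitely many breakpoints and stating the chain-rule identity cleanly at the finitely many nonsmooth points — but this is routine once monotonicity is in hand, and this slope-dichotomy argument avoids the more tedious direct case analysis of how $\sigma$ reshapes a PL graph.
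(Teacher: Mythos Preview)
Your proof is correct and takes a genuinely different route from the paper's. The paper argues by normalization and case analysis: using positive homogeneity it reduces to $w_0=1$ and $w_i\in\{\pm 1\}$, then observes that if all $w_i=1$ the network collapses to $\sigma_0(x+\tilde a)+\tilde b$ (one breakpoint), whereas as soon as a first weight $-1$ appears the composition $\sigma_0(-\sigma_0(x+\tilde a)-\tilde b)$ is bounded with at most two breakpoints, and then one checks that further ReLU layers applied to a bounded monotone PL function with at most two breakpoints cannot create new ones. Your argument instead extracts a global slope dichotomy from the chain rule: every linear piece of $f_L$ has slope either $0$ or $p=\prod_j w_j$, the latter exactly on $J=\bigcap_{j<L}\{f_j>0\}$, and monotonicity of each $f_j$ forces $J$ to be an interval, yielding at most three pieces immediately.

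Your approach is cleaner: it avoids the normalization and the case split entirely, and it makes transparent why ReLU differs so sharply from leaky-ReLU here (the zero in $\sigma'$ collapses the slope set to two values, whereas for leaky-ReLU the slope set is $\{c\alpha^k\}$). The paper's approach, in exchange, gives a more concrete picture of what the resulting function actually looks like. One small quibble: since each $f_{k-1}$ is monotone, the set $\{f_{k-1}\le 0\}$ is already a ray (or empty, or all of $\mathbb{R}$), so it has at most one finite endpoint rather than two; this does not affect your argument, which only needs finitely many breakpoints at that stage.
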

  \begin{proof}
  Let $\sigma_0$ be a ReLU activation function, and consider the ReLU network $f_L(x)$ with width one and depth $L$; that is,
  $f_0(x)=w_0 x+b_0, f_{i}(x)=w_{i}\sigma_0(f_{i-1}(x))+b_{i}$, $i=1,\cdots,L$. Similar to the leaky-ReLU case, we can see that all $f_i(x)$ are monotonic. If there is a $w_i=0$, then $f_L(x)$ is a constant function that is trivial. For the other case, we can assume $w_0=1$ and $w_i \in \{-1,1\}$ due to the positive homogeneity of ReLU.
  
  If $w_0=w_1=\cdots=w_{L}=1$, then $f_L(x)=\sigma_0(x+\tilde{a}_L)+\tilde{b}_L$ for some $\tilde{a}_L$ and $\tilde{b}_L$, which has only one breakpoint.
  
  If $w_0=\cdots=w_{n-2}=1,w_{n-1}=-1$ for $n\le L$, then $g_n(x):=\sigma_0(f_{n-1}(x))$ has the following form:
  $$g_n(x):=\sigma_0(f_{n-1}(x))=\sigma_0(-\sigma_0(x+\tilde{a}_{n-1})-\tilde{b}_{n-1}),$$
  for some $\tilde{a}_{n-1}$ and $\tilde{b}_{n-1}$. It is obvious that $g_n(x)$ and $f_n(x) = w_n g_n(x) + b_n$ are bounded as $0\le g_n(x)\le \max \{0,-\tilde{b}_{n-1}\}$. In addition, $f_n(x)$ is a bounded PL function with at most two breakpoints. As a consequence, adding more ReLU layers does not increase the number of breakpoints. Therefore, $f_L(x)$ has at most three pieces.
  \end{proof}
  
  Next, we note that our results hold for activation functions that can approximate leaky ReLU functions. Below are some examples (they are assumed to be continuous; see also the examples shown in Figure \ref{fig:activation}).
  \begin{itemize}
  \item[1)] PL/smooth functions that have at least one strictly increasing or decreasing breakpoint. In other words, leaky-ReLU is (almost) one of these functions. For example, $\sigma(x) = \max(x^3-1,x)$.
  \item[2)] Unbounded functions that have two asymptotic lines with different nonzero slopes, such as $\sigma(x) = x + \ln(1+e^x)$.
  \end{itemize}
  Of course, it is not necessary for these functions to be monotonic. If we only want the approximation results (without the homeomorphism-preserving property), nonmonotonic activation functions, such as $\sigma(x) = \max(x^2-1,x)$, can satisfy the main results.
  Our proofs show that it is essential that $\sigma$-networks with widths of one can universally approximate monotone functions on any bounded interval. Our construction shows that leaky ReLU satisfies this condition and that ReLU does not. However, we do not know the answer for more general activation functions (except the abovementioned examples), such as $\sigma(x) = \tanh(x), \ln(1+e^x), \sin(x), x^2$ and $\text{ReLU}^2(x)$.

  \begin{figure}[htp!]
        \centering
        \includegraphics[width=9cm]{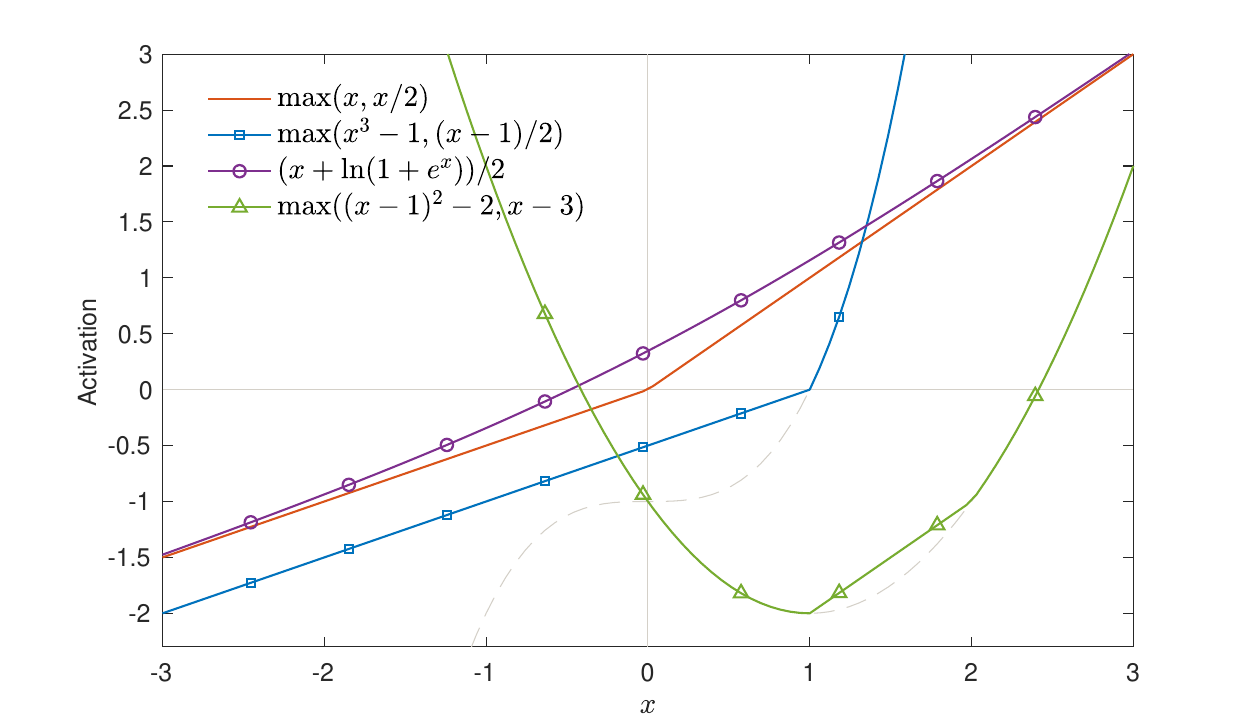}
    \caption{Four examples of activation functions that satisfy our main results. }
        \label{fig:activation}
  \end{figure}

  {\changes

\begin{figure}[htp!]
        \centering
        \includegraphics[width=9cm]{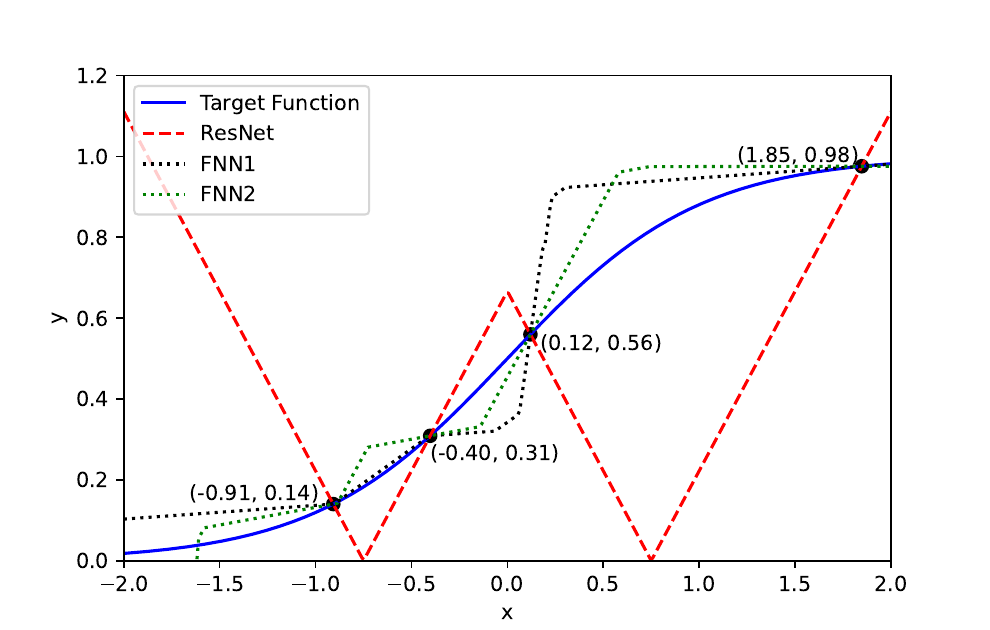}
    \caption{A 1D experiment to show the difference between FNNs and ResNets. }
        \label{fig:experiments}
  \end{figure}

  \begin{figure}[htp!]
        \centering
        \includegraphics[width=12cm]{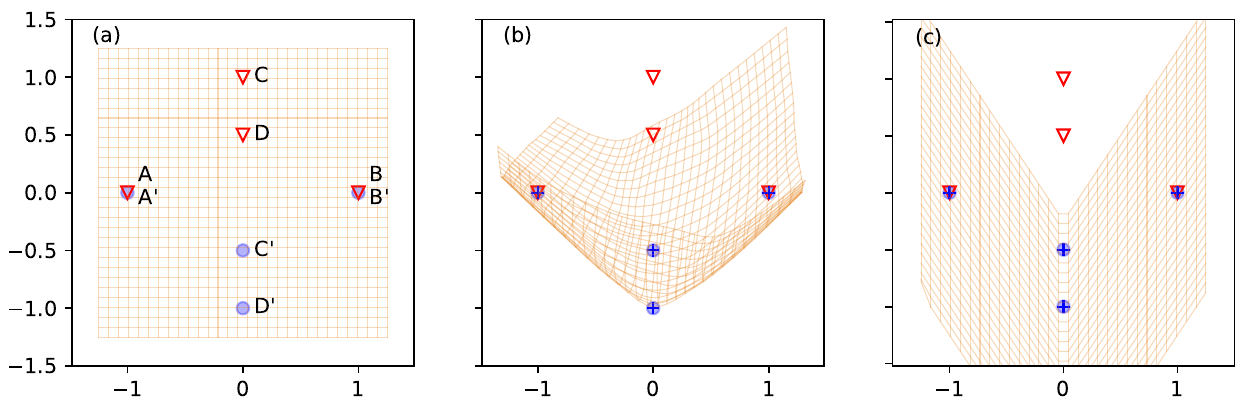}
    \caption{{\changess A 2D experiment demonstrating the difference between wide and narrow feedforward neural networks. (a) A toy task to construct a homeomorphism that maps points A, B, C, and D to points A', B', C', and D', respectively. (b) Result from a wide FNN (one hidden layer, width = 100). (c) Result from a narrow FNN (width = 2, depth = 5).}}
        \label{fig:experiments2d}
  \end{figure}

  Then, we want to note the difference between FNNs and ResNets. We use a simple numerical experiment to emphasize the homeomorphic property. Consider a target function $u(x) = \tfrac{1}{2}(1+\tanh (x)), x\in [-2,2],$ with four given training points shown in Figure \ref{fig:experiments}. When we use leaky-ReLU FNNs with width one to fit these points, the networks are always monotonic. However, the ResNet with one hidden neuron is not always monotonic. In Figure \ref{fig:experiments}, we show a nonmonotonic ResNet that has two blocks with expression $y = 8 R_2 \circ R_1(x)$ where $R_1(x) = x - \tfrac{4}{3} \sigma_{0.5}(x)$ and $R_2(x) = x + \tfrac{4}{3} \sigma_{0.5}(-x-0.25) + 0.25$. This indicates that ResNets do not preserve the homeomorphic property.

{\changess

Next, we address the difference between wide and narrow FNNs. We continue to emphasize the homeomorphic property by using a two-dimensional toy task, as shown in Figure~\ref{fig:experiments2d}(a). The task is to construct a homeomorphism that maps points A, B, C, and D to points A', B', C', and D', respectively. A mesh grid is provided to illustrate the effect of the constructed map. Figures~\ref{fig:experiments2d}(b) and \ref{fig:experiments2d}(c) show the results of a wide ReLU FNN (one hidden layer, width = 100) and a narrow leaky-ReLU FNN (width = 2, depth = 5), respectively. The wide FNN is randomly initialized, while the narrow FNN is initialized according to the structure studied in this paper. Both FNNs are trained using the Adam optimizer \cite{kingma2014adam} with a learning rate of 0.001. It is clear that the learned wide FNN does not always satisfy the requirements of a diffeomorphism.

From a training perspective, it is important to highlight the difference between the FNN constructed in this paper and an FNN trained from scratch. The FNN we construct preserves the homeomorphic property of flow maps because we use a nonsingular weight matrix in each layer. However, nonsingularity is not guaranteed during the training process. One way to address this issue is to check the singularity of the weight matrices and perturb any singular matrices to make them nonsingular during training. In fact, if the weight matrix becomes singular, the leaky-ReLU network loses both the homeomorphic property and its approximation power. Therefore, monitoring and addressing singularity could play a crucial role in the training process.
}

  Finally, we want to note that the results in this paper only focus on the approximation ability of FNNs. It would be interesting to analyze the approximation rate. For one-dimensional cases, we estimate the required depth in Remark~\ref{remark:L_bound_1d}. For higher dimensional cases, such estimation is challenging. In Lemma~\ref{th:split_approach}, we estimate the number of splitting steps $n = O({J \tau c^2 e^{\tilde L \tau}}/{(\tilde L\varepsilon)})$ with $J=Nd$ for the splitting in (\ref{eq:splitting_of_tilde_v}). However, this is far from giving the required depth $L$ because it is also affected by the approximation in Step 1 and Step 3 provided in Section~\ref{sec:main}. Systemly studying this topic is beyond the scope of this paper. We leave it as future work.

  }

  {\changes
  \section{Conclusion}
  \label{sec:conclusion}

  This paper employs {\changess numerical approximation techniques} to construct a cross-disciplinary theoretical framework. 
  By devising an appropriate operator splitting algorithm, a connection between the flow map of dynamical systems and the feedforward neural network is established, in which the width of the neural network equals the dimensionality of its inputs and outputs. It is expected that this linkage will foster research in multiple domains, including the investigation into the minimum width required for a feedforward neural network to possess universal approximation properties, as well as the exploration of the universal approximation properties of flow maps.

  ~\\
  }

  \textbf{Funding} G. Ji is partially supported by the National Natural Science Foundation of China (Grant No. 11871105), and Y. Cai is partially supported by the National Natural Science Foundation of China (Grant No. 12201053).
  
  \textbf{Data Availability:} Data sharing is not applicable to this article as no datasets were generated or analyzed during
  the current study.
  
%   \section*{Declarations}
%   \textbf{Conflict of interest}:
%   The authors declare that they have no conflict of interest.
  
  %\begin{acknowledgements}
  %If you'd like to thank anyone, place your comments here
  %and remove the percent signs.
  %\end{acknowledgements}

  % Authors must disclose all relationships or interests that 
  % could have direct or potential influence or impart bias on 
  % the work: 
  %
  % \section*{Conflict of interest}
  %
  % The authors declare that they have no conflict of interest.

  % BibTeX users please use one of
  %\bibliographystyle{spbasic}      % basic style, author-year citations
  \bibliographystyle{spmpsci}      % mathematics and physical sciences
\bibliography{refs.bib}   % name your BibTeX data base
  
  \end{document}